
\documentclass[10pt,twocolumn,letterpaper]{article}

\usepackage[pagenumbers]{cvpr} 
\usepackage{bm}
\usepackage{algorithmic,algorithm}
\usepackage{multirow} 
\usepackage{booktabs, makecell}
\newcommand{\et}[2]{#1$^{\scriptscriptstyle\pm#2}$}
\newcommand{\ets}[2]{\underline{#1$^{\scriptscriptstyle\pm#2}$}}
\newcommand{\etb}[2]{\textbf{#1$^{\scriptscriptstyle\pm#2}$}}
\usepackage{pifont}
\newcommand{\cmark}{\ding{51}}

\usepackage{amsthm}
\usepackage{mathtools}

\definecolor{cvprblue}{rgb}{0.21,0.49,0.74}
\usepackage[pagebackref,breaklinks,colorlinks,allcolors=cvprblue]{hyperref}

\newtheorem{theorem}{Theorem}[section]
\newtheorem{lemma}[theorem]{Lemma}
\newtheorem{proposition}[theorem]{Proposition}
\newtheorem{definition}[theorem]{Definition}
\newtheorem{assumption}[theorem]{Assumption}
\newtheorem{remark}[theorem]{Remark}

\newcommand{\bx}{\mathbf{x}}

\newcommand{\bz}{\mathbf{z}}


\title{FloodDiffusion: Tailored Diffusion Forcing for Streaming Motion Generation}


\author{Yiyi Cai\textsuperscript{1}\quad 
Yuhan Wu\textsuperscript{2}\quad 
Kunhang Li\textsuperscript{2} \quad 
You Zhou\textsuperscript{1} \quad 
Bo Zheng\textsuperscript{1} \quad 
Haiyang Liu\textsuperscript{2} \quad 
\\
\textsuperscript{1}Shanda AI Research Tokyo \quad
\textsuperscript{2}The University of Tokyo \quad \\
\\
\url{https://shandaai.github.io/FloodDiffusion/}
}

\begin{document}
\twocolumn[{%
\renewcommand\twocolumn[1][]{#1}%
\maketitle
\begin{center}
    \centering
    \captionsetup{type=figure}
    \vspace{-1cm}
    \includegraphics[trim=0 0 0 0, clip,width=1.0\textwidth]{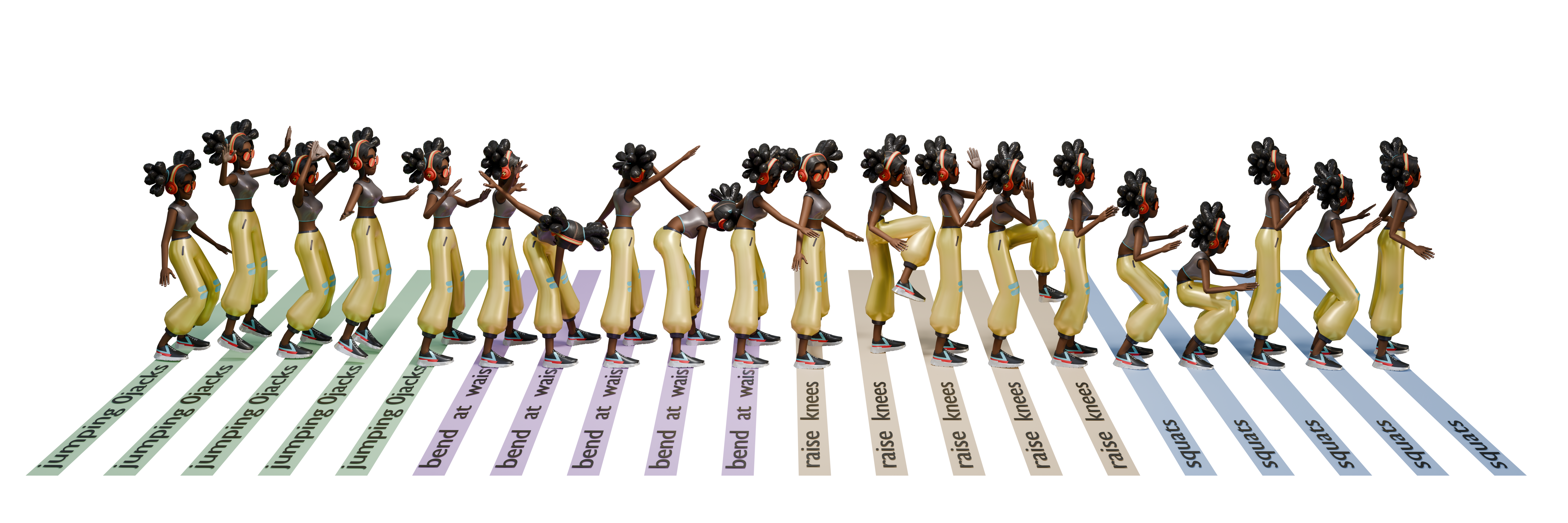}
    \vspace{-0.8cm}
    \captionof{figure}{\textbf{FloodDiffusion} is a diffusion forcing based framework for streaming human motion generation. Given time-varying text prompts, such as ``raise knees'' followed by ``squats'', it generates smooth, continuous human motions aligned with the text. The framework natively handles prompt changes and does not require inference-time optimizations like explicit prompt refresh detection.}
    \label{fig:teaser}
    
\end{center}%
}]

\begin{abstract}
We present \textit{FloodDiffusion}, a new framework for text-driven, streaming human motion generation. Given time-varying text prompts, FloodDiffusion generates text-aligned, seamless motion sequences with real-time latency.
Unlike existing methods that rely on chunk-by-chunk or auto-regressive model with diffusion head, we adopt a diffusion forcing framework to model this time-series generation task under time-varying control events.
We find that a straightforward implementation of vanilla diffusion forcing (as proposed for video models) fails to model real motion distributions. We demonstrate that to guarantee modeling the output distribution, the vanilla diffusion forcing must be tailored to: (i) train with a bi-directional attention instead of casual attention; (ii) implement a lower triangular time scheduler instead of a random one; (iii) utilize a continues time-varying way to introduce text conditioning.
With these improvements, we demonstrate in the first time that the diffusion forcing-based framework achieves state-of-the-art performance on the streaming motion generation task, reaching an FID of 0.057 on the HumanML3D benchmark. Models, code, and weights are available.

\end{abstract}

\section{Introduction}
\label{sec:introduction}

Streaming motion generation from text prompts has growing interest due to its potential for controlling real-time agents, such as in-game NPCs or robotic systems \citep{kodaira2025streamdiffusion,liu2022disco,yang2025shallow,liu2024tango,liu2025video,jiang2023motiongpt}. However, most existing works focus on \textit{non-streaming} motion generation (often noted as text-to-motion \citep{tevet2022human,guo2022tm2t,zhang2024motiondiffuse,petrovich2022temos,ren2023diffusion}), which generates a complete motion sequence from a static text prompt. Directly concatenating outputs from such models for streaming purposes is non-viable, as it results in unsmooth transitions and significant latency.

More recent works have begun to explore dedicated streaming frameworks, primarily based on chunk-by-chunk diffusion \citep{zhang2025primal,xie2025progressive} or auto-regressive (AR) models with a diffusion head \citep{xiao2025motionstreamer,yin2025slow}. For instance, PRIMAL \citep{zhang2025primal} generates sequences via chunk-by-chunk diffusion, conditioned on past motion and the current text. MotionStreamer \citep{xiao2025motionstreamer} uses an AR structure to handle long-term relationships, generating a single token at each step via a small diffusion head. However, these approaches have limitations: (i) chunk-by-chunk diffusion suffers from high ``first-token'' latency, as the generation process must wait to fill a full context length; and (ii) AR-based frameworks are limited in their ability to capture and utilize explicit history from past motions.

In the context of temporal sequence generation, another framework, \textit{diffusion forcing}, was originally proposed for video generation \citep{ho2022video,yang2023diffusion,xie2025progressive,yin2025slow}. By assigning different time-steps to each noisy frame, it theoretically offers advantages in both ``first-token'' latency and the direct utilization of explicit history frames. These advantages directly address the primary limitations of existing methods. Thus, in this paper, we explore a diffusion forcing approach for the streaming human motion generation task.

We found that a straightforward implementation following the vanilla diffusion forcing fails to generate high-quality results. Thus, we introduce a tailored version with several modifications. Firstly, the design of causal attention is replaced with bi-directional attention, to guarantee that frames in the buffer are denoised based on the newest text prompt. Secondly, while vanilla diffusion forcing samples random timesteps for each frame, we demonstrate that a simplified, low triangular based timestep sampler works better. Lastly, to address inconsistent information fusion when new text prompts arrive, we replace the refresh mechanism with a time-varying text conditioning fusion approach.

With these improvements, we demonstrate for the first time that a diffusion forcing-based framework can achieve state-of-the-art performance in this domain. Our final framework, \textit{FloodDiffusion}, achieves an FID of $0.057$ on the HumanML3D dataset. This result outperforms existing streaming motion generation models and performs on par with SOTA non-streaming methods.

In summary, our contributions are:
\begin{itemize}
    \item We introduce \textit{FloodDiffusion}, to our knowledge, the first streaming motion generation framework based on diffusion forcing, featuring modifications in model architecture, training scheduler, and condition fusion.
    \item We mathematically proof that our tailored framework guarantees the reproduction of the target data distribution, similar to original full-length diffusion, without optimizing an ELBO proxy.
    \item We show that \textit{FloodDiffusion} achieves state-of-the-art performance on the streaming motion generation task, verified on the HumanML3D and BABEL dataset.
\end{itemize}

\section{Related Works}
\label{sec:relatedworks}

\subsection{Streaming Generation with Diffusion Forcing}
\label{sec:diffusionandflow}
Diffusion forcing was first described as a way to let different frames/tokens carry different noise levels so that generation can proceed in a flexible manner \citep{chen2024diffusion}. Follow-up works reduce the train–test gap by explicitly rolling out the model during training, \textit{e.g.}, self forcing \citep{huang2025self}, and by keeping a longer denoising buffer \citep{liu2025rolling}. Recent works \citep{sun2025ar, chen2025skyreels} impose a non-decreasing timestep constraint to enforce causal ordering, while still using random schedule. There are also system-level pipelines for real-time interactive diffusion that focus on latency and cache reuse \citep{kodaira2025streamdiffusion}. These studies are mostly in video form, assume spatially large inputs, and use attention masks designed for video. Our task is different: motion is 1D in time, the control (text) may change at any step, and the model must immediately reflect the newest text on the buffered frames. Directly using the video-style diffusion forcing leads to suboptimal motion quality.

\subsection{Human Motion Generation}
\label{sec:humanmotion}
Text-to-motion has been mainly studied in the non-streaming setting. Early works show that motion and text can be modeled jointly, either by tokenizing motion or by learning a shared latent space \citep{guo2022tm2t,petrovich2022temos}. Diffusion-based motion models (\textit{e.g.}, MDM, MotionDiffuse) bring in the diffusion based methods on HumanML3D and KIT-ML \citep{tevet2022human,zhang2024motiondiffuse,ren2023diffusion}. Large or unified models further integrate text-to-motion, motion-to-text, and editing \citep{jiang2023motiongpt}, and discrete/token-based designs like T2M-GPT and MoMask improve compactness and FID \citep{zhang2023generating,guo2024momask,zhang2023remodiffuse,chen2024text}. These methods are usually trained and evaluated on HumanML3D and KIT-ML \citep{guo2022generating,plappert2016kit} and assume the whole text prompt of future motion is available. More recent works start to consider online/streaming motion. MotionStreamer builds a causal latent space and generates motion step by step \citep{xiao2025motionstreamer}. PRIMAL adopts an interactive, avatar-oriented formulation and repeatedly uses a chunk diffusion to extend motion \citep{zhang2025primal}. They can run online, but they either suffer from first-token latency (need to fill a chunk) or only implicitly use long motion history. Our work show that a diffusion-forcing style model, once tailored for motion, can reach the quality of non-streaming text-to-motion models while staying streamable.

\begin{figure*}[t]
    \centering
    \includegraphics[width=\linewidth]{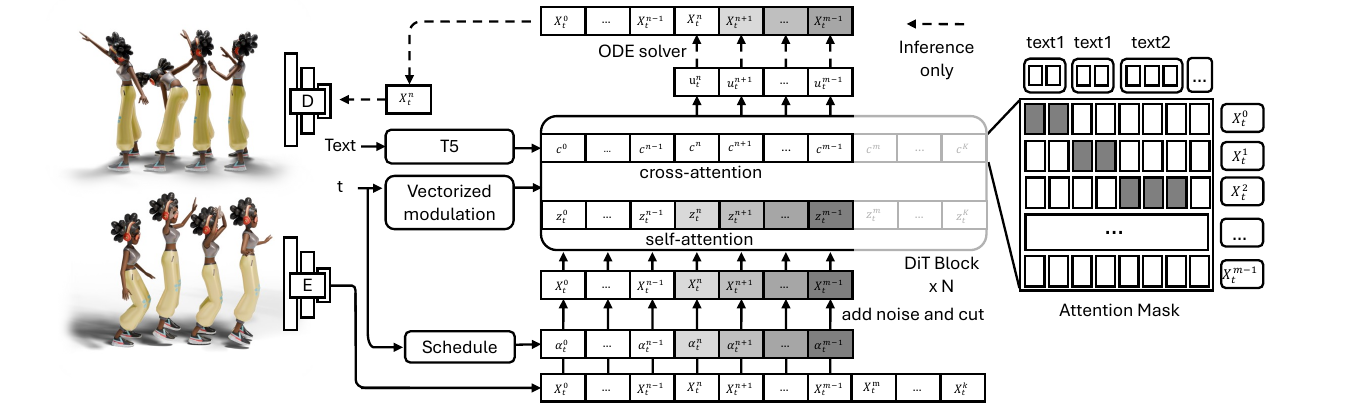}
    \caption{\textbf{Pipeline Overview.} FloodDiffusion is a latent diffusion based framework, the $263D$ motion stream is encoded to a compact $4D$ latent sequence via our causal VAE. Then the model predicts the velocity for the latent, $\hat{u}_t$, for the active window $m(t){:}n(t)$ conditioned on the context $0{:}n(t)$. The key designs are adding noise for the sequence according to lower-triangular time schedule and a Frame-wise text conditioning using an attention mask. During inference, we start from noise and slide the window, generating latent frames that are immediately decoded for streaming output.}
    \label{fig:pipeline}
\end{figure*}

\section{Methods}
\label{sec:methods}

We formulate streaming motion generation as a conditional time-series problem. Given control signals $\mathbf{c}^{0:K}$, \textit{i.e.}, the text prompt in this paper, a generator produces  $\mathbf{X}^{0:K}=g(\mathbf{c}^{0:K})$ in a streaming manner.

\subsection{Preliminaries}
\label{sec:preliminaries}

We fix the initialization distribution to be standard white Gaussian noise:
\begin{equation}
p_{\text{init}} = \mathcal{N}(\mathbf{0},\mathbf{I})
\end{equation}
Diffusion models then perform distribution matching by transporting $p_0 \sim p_{\text{init}}$ to the data distribution $p_T \sim p_{\text{data}}$ via a time-indexed Gaussian corruption path. For each data point $\mathbf{z} \sim p_{\text{data}}$ and time $t\in[0,T]$, define
\begin{equation}
p_t(\mathbf{x} \mid \mathbf{z}) = \mathcal{N}\big(\mathbf{x};\, \alpha_t \, \mathbf{z}, \, \beta_t^2 \, \mathbf{I}\big),
\end{equation}
where $\alpha_t$ and $\beta_t$ are scalar schedules satisfying the boundary conditions $\alpha_0=0$, $\alpha_T=1$, $\beta_0=1$, $\beta_T=0$. This induces a Gaussian conditional probability path from $p_{\text{init}}$ to $p_{\text{data}}$.

\subsection{Vectorized Time Schedule}
\label{sec:timestep}

In the standard formulation, $\alpha_t$ and $\beta_t$ are scalar functions of $t$. We extend them to vector-valued schedules to enable streaming inference.

\begin{definition}[Vectorized Time Schedule]
\label{def:vectorized_schedule}
Let $K$ denote the sequence length. We define vectorized time schedules as:
\begin{align}
\bm{\alpha}_t &= [\alpha_t^0,\alpha_t^1,\dots,\alpha_t^{K-1}] \in \mathbb{R}^K \\
\bm{\beta}_t &= [\beta_t^0,\beta_t^1,\dots,\beta_t^{K-1}] \in \mathbb{R}^K
\end{align}
where $t \in [0, T]$ and each component satisfies the boundary conditions:
\begin{align}
\alpha_0^k = 0, \quad \alpha_T^k &= 1, \quad \beta_0^k = 1, \quad \beta_T^k = 0 \\
\forall k &\in \{0,\dots,K-1\}
\end{align}
\end{definition}

Under this vectorized formulation, the conditional path factorizes across positions:
\begin{equation}
p_t(\mathbf{x}\mid \mathbf{z}) = \prod_{k=0}^{K-1} \mathcal{N}\big(x^k; \alpha_t^k z^k, (\beta_t^k)^2\mathbf{I}\big),
\end{equation}

\begin{proposition}[Vectorized Conditional Dynamics]
\label{prop:conditional_dynamics}
Given the vectorized time schedule, the conditional vector field and score function are:
\begin{align}
u_t(\mathbf{x}\mid \mathbf{z}) &= \left(\dot{\bm{\alpha}}_t - \frac{\dot{\bm{\beta}}_t}{\bm{\beta}_t}\odot \bm{\alpha}_t\right) \odot \mathbf{z} + \left(\frac{\dot{\bm{\beta}}_t}{\bm{\beta}_t}\right) \odot \mathbf{x} \label{eq:conditional_vf}\\
s_t(\mathbf{x}\mid \mathbf{z}) &= -\frac{(\mathbf{x}-\bm{\alpha}_t \odot \mathbf{z})}{\bm{\beta}_t^2} \label{eq:conditional_score}
\end{align}
where $\odot$ denotes element-wise multiplication the division here is also element-wise.
\end{proposition}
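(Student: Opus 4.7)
The plan is to reduce the vectorized claim to the standard scalar derivation applied independently in each coordinate, then reassemble componentwise. The key observation is that, by Definition~3.1 and the stated factorization $p_t(\mathbf{x}\mid\mathbf{z})=\prod_k \mathcal{N}(x^k;\alpha_t^k z^k,(\beta_t^k)^2 \mathbf{I})$, the conditional dynamics decouple across positions $k$, so no genuinely new analytic work is required beyond bookkeeping.

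For the vector field identity in equation~(9), I would fix $\mathbf{z}$ and a coordinate $k$ and work with the one-dimensional marginal $p_t(x^k\mid z^k)=\mathcal{N}(x^k;\alpha_t^k z^k,(\beta_t^k)^2)$. By the reparameterization trick, a sample may be written $x_t^k=\alpha_t^k z^k+\beta_t^k\,\varepsilon^k$ with $\varepsilon^k\sim\mathcal{N}(0,\mathbf{I})$. Differentiating in $t$ yields $\dot{x}_t^k=\dot{\alpha}_t^k z^k+\dot{\beta}_t^k\varepsilon^k$; eliminating $\varepsilon^k$ via $\varepsilon^k=(x_t^k-\alpha_t^k z^k)/\beta_t^k$ gives the per-coordinate velocity $u_t^k(x^k\mid z^k)=\bigl(\dot{\alpha}_t^k-\tfrac{\dot{\beta}_t^k}{\beta_t^k}\alpha_t^k\bigr)\,z^k+\tfrac{\dot{\beta}_t^k}{\beta_t^k}\,x^k$. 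Stacking across $k=0,\dots,K-1$ and interpreting coordinate-wise multiplication as $\odot$ recovers equation~(9). For the score in equation~(10), I would differentiate directly: since $\log p_t(\mathbf{x}\mid\mathbf{z})=\sum_k \log\mathcal{N}(x^k;\alpha_t^k z^k,(\beta_t^k)^2)$, the gradient factorizes coordinate-wise, and each summand contributes $\partial_{x^k}\log p_t(x^k\mid z^k)=-(x^k-\alpha_t^k z^k)/(\beta_t^k)^2$; stacking yields equation~(10).

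To confirm that the derived $u_t$ is genuinely \emph{the} conditional vector field generating $p_t(\cdot\mid\mathbf{z})$ (rather than merely a convenient rewrite of $\dot{x}_t^k$), I would verify the continuity equation $\partial_t p_t(\mathbf{x}\mid\mathbf{z})+\nabla\cdot\bigl(u_t(\mathbf{x}\mid\mathbf{z})\,p_t(\mathbf{x}\mid\mathbf{z})\bigr)=0$ componentwise. Because the density factorizes over $k$, this reduces to checking the one-dimensional continuity equation for each coordinate, a short direct computation with the explicit Gaussian form; alternatively one can invoke the known scalar probability-flow result per coordinate and sum.

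No step is analytically subtle. The only mild care needed is at the path endpoints where $\beta_t^k\to 0$ or $\alpha_t^k\to 0$, which make the score or the coefficient $\dot{\beta}_t^k/\beta_t^k$ singular. I would therefore derive the identities on the open interval $t\in(0,T)$ and handle the boundary by continuity using the boundary conditions in Definition~3.1, exactly mirroring the scalar treatment these formulas generalize. The real obstacle is thus essentially notational rather than mathematical: ensuring that the Hadamard-product formulation faithfully reproduces the $K$ independent coordinate-wise relations with no hidden cross-coordinate coupling, which is guaranteed by the product structure of $p_t(\mathbf{x}\mid\mathbf{z})$.
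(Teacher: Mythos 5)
Your proposal is correct and follows essentially the same route as the paper's own proof: both exploit the factorized Gaussian path to decouple the problem coordinate-wise, obtain the score by direct differentiation of the log-density, and justify the velocity field by reducing the $K$-dimensional continuity equation to the scalar one per coordinate. The only difference is presentational—you first construct the velocity candidate explicitly via the flow map $\mathbf{x}_t = \bm{\alpha}_t \odot \mathbf{z} + \bm{\beta}_t \odot \bm{\epsilon}$ and then confirm it by continuity, whereas the paper states the candidate and appeals to the known scalar Gaussian flow result for the per-coordinate check.
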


\begin{definition}[Marginal Dynamics]
\label{def:marginal_dynamics}
The marginal vector field and score function conditioned on control signal $\mathbf{c}$ are defined as:
\begin{align}
u_t(\mathbf{x},\mathbf{c}) &= \int u_t(\mathbf{x}\mid \mathbf{z})\frac{p_t(\mathbf{x}\mid \mathbf{z})p_{\text{data}}(\mathbf{z}\mid \mathbf{c})}{p_t(\mathbf{x}, \mathbf{c})}\mathrm{d}\mathbf{z} \label{eq:marginal_vf}\\
s_t(\mathbf{x},\mathbf{c}) &= \int s_t(\mathbf{x}\mid \mathbf{z})\frac{p_t(\mathbf{x}\mid \mathbf{z})p_{\text{data}}(\mathbf{z}\mid \mathbf{c})}{p_t(\mathbf{x},\mathbf{c})}\mathrm{d}\mathbf{z} \label{eq:marginal_score}
\end{align}
where the marginal distribution is given by:
\begin{equation}
p_t(\mathbf{x},\mathbf{c})=\int p_t(\mathbf{x}\mid \mathbf{z})p_{\text{data}}(\mathbf{z}\mid \mathbf{c})\mathrm{d}\mathbf{z}
\end{equation}
\end{definition}

\begin{theorem}[Conditional Generation]
\label{thm:conditional_generation}
Consider the stochastic differential equation (SDE):
\begin{align}
\mathbf{X}_0 &\sim p_{\text{init}} \label{eq:sde_init}\\
\mathrm{d}\mathbf{X}_t &= \left[u_t(\mathbf{X}_t,\mathbf{c})+\frac{\sigma_t^2}{2}s_t(\mathbf{X}_t,\mathbf{c})\right]\mathrm{d}t+\sigma_t\mathrm{d}\mathbf{W}_t \label{eq:sde}
\end{align}
where $\mathbf{W}_t$ is a standard Wiener process. Then $\mathbf{X}_t\sim p_t(\cdot \mid \mathbf{c})$ for all $t \in [0,T]$, and consequently:
\begin{equation}
\mathbf{X}_T\sim p_{\text{data}}(\cdot \mid \mathbf{c})
\end{equation}
\end{theorem}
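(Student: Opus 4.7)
The plan is to show that the SDE in Eq.~\ref{eq:sde} admits $p_t(\cdot \mid \mathbf{c})$ as its marginal law by matching its Kolmogorov forward (Fokker-Planck) equation, after which the conclusion $\mathbf{X}_T \sim p_{\text{data}}(\cdot \mid \mathbf{c})$ follows from the boundary conditions $\alpha_T^k = 1$, $\beta_T^k = 0$. The backbone is: first establish a \emph{conditional} continuity equation on the Gaussian path $p_t(\mathbf{x} \mid \mathbf{z})$; lift it to the marginal $p_t(\mathbf{x}, \mathbf{c})$ by integration in $\mathbf{z}$; then absorb the Brownian term via the score identity.

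First I would verify that for every fixed $\mathbf{z}$ the Gaussian path together with the conditional vector field from Proposition~\ref{prop:conditional_dynamics} satisfies the continuity equation $\partial_t p_t(\mathbf{x} \mid \mathbf{z}) + \nabla_{\mathbf{x}} \cdot \bigl( u_t(\mathbf{x} \mid \mathbf{z}) \, p_t(\mathbf{x} \mid \mathbf{z}) \bigr) = 0$. Because under the vectorized schedule both the density and $u_t(\mathbf{x}\mid\mathbf{z})$ factorize across the temporal index $k$, this collapses into $K$ independent scalar-schedule Gaussian identities, each checkable directly from $\dot{\alpha}_t^k$, $\dot{\beta}_t^k$ and $\nabla_{\mathbf{x}} p_t(\mathbf{x} \mid \mathbf{z}) = s_t(\mathbf{x} \mid \mathbf{z}) \, p_t(\mathbf{x} \mid \mathbf{z})$. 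Integrating this equation against $p_{\text{data}}(\mathbf{z} \mid \mathbf{c}) \, \mathrm{d}\mathbf{z}$ and swapping the differential operators with the integral (justified for $t < T$ by Gaussian tail decay) yields the marginal continuity equation $\partial_t p_t(\mathbf{x}, \mathbf{c}) + \nabla_{\mathbf{x}} \cdot \bigl( u_t(\mathbf{x}, \mathbf{c}) \, p_t(\mathbf{x}, \mathbf{c}) \bigr) = 0$, where $u_t(\mathbf{x}, \mathbf{c})$ is exactly Eq.~\ref{eq:marginal_vf}.

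Finally I would upgrade this to the Fokker-Planck equation of the SDE. Writing the drift of Eq.~\ref{eq:sde} as $u_t + \tfrac{\sigma_t^2}{2} s_t$ and using the marginal score identity $s_t(\mathbf{x}, \mathbf{c}) \, p_t(\mathbf{x}, \mathbf{c}) = \nabla_{\mathbf{x}} p_t(\mathbf{x}, \mathbf{c})$, the extra drift term contributes exactly $-\tfrac{\sigma_t^2}{2} \Delta p_t$, which cancels the diffusion term $\tfrac{\sigma_t^2}{2} \Delta p_t$ arising from $\sigma_t \, \mathrm{d}\mathbf{W}_t$. Hence the marginal density of $\mathbf{X}_t$ and $p_t(\cdot, \mathbf{c})$ solve the same forward PDE, and by $\bm{\alpha}_0 = \mathbf{0}$, $\bm{\beta}_0 = \mathbf{1}$ they share the initial condition $p_{\text{init}} = \mathcal{N}(\mathbf{0}, \mathbf{I})$; uniqueness of Fokker-Planck solutions under the Gaussian-path regularity then pins down the law for all $t$.

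The main obstacle I anticipate is analytic rather than structural: as $t \to T$ one has $\beta_t^k \to 0$, so the conditional score $-(\mathbf{x} - \bm{\alpha}_t \odot \mathbf{z}) / \bm{\beta}_t^2$ blows up coordinatewise and the differentiation-under-the-integral step becomes delicate. The clean fix is to run the argument on $[0, T-\varepsilon]$, where all $\beta_t^k$ are bounded below, and then take $\varepsilon \to 0$ using the uniform Gaussian moment bounds on $p_t(\cdot, \mathbf{c})$ inherited from the conditional path, as in the scalar-schedule case. The vectorization only introduces per-coordinate bookkeeping on top of the standard diffusion/flow-matching argument and adds no new analytical difficulty.
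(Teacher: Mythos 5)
Your proposal is correct and follows essentially the same route as the paper: both arguments reduce the Fokker--Planck equation of the SDE with drift $u_t + \tfrac{\sigma_t^2}{2}s_t$ to the marginal continuity equation $\partial_t p_t = -\nabla\cdot(p_t u_t)$ by invoking the score identity $p_t\,s_t = \nabla_{\mathbf{x}} p_t$, so that the $\tfrac{\sigma_t^2}{2}$-weighted score drift cancels the Laplacian diffusion term. You are somewhat more explicit than the paper in two places that are worth keeping: you actually derive the marginal continuity equation by integrating the conditional one against $p_{\text{data}}(\mathbf{z}\mid\mathbf{c})\,\mathrm{d}\mathbf{z}$ (the paper takes it as given from Definition~\ref{def:marginal_dynamics}), and you flag both the uniqueness of the forward-PDE solution and the $\beta_t^k\to 0$ degeneracy near $t=T$, neither of which the paper's proof addresses.
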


\paragraph{Training Objective.} We consider two complementary objectives.

\emph{(a) Score training (for SDE, $\sigma_t>0$).} We train a score network $s^\theta_t(\mathbf{x},\mathbf{c})$ by denoising score matching:
\begin{align}
\hat{s}_t(\mathbf{x},\mathbf{c}) &= \arg\min_{s^\theta_t} \mathbb{E}_{t, \mathbf{z},\bm{\epsilon}}\left[\left\|s^\theta_t(\mathbf{x}_t,\mathbf{c}) - s_t(\mathbf{x}_t\mid \mathbf{z})\right\|^2\right], \label{eq:score_training}
\end{align}
where $t\sim \mathrm{Unif}(0,T)$, $\mathbf{z}\sim p_{\text{data}}(\cdot\mid\mathbf{c})$, $\bm{\epsilon}\sim p_{\text{init}}$, and $\mathbf{x}_t = \bm{\alpha}_t \odot \mathbf{z} + \bm{\beta}_t \odot \bm{\epsilon}$.
When a score is available, a corresponding velocity can be obtained via
\begin{equation}
\hat{u}_t(\mathbf{x},\mathbf{c}) = \left(\bm{\beta}_t^2 \odot \frac{\dot{\bm{\alpha}}_t}{\bm{\alpha}_t}-\dot{\bm{\beta}}_t \odot \bm{\beta}_t\right) \odot \hat{s}_t(\mathbf{x},\mathbf{c}) + \frac{\dot{\bm{\alpha}}_t}{\bm{\alpha}_t} \odot \mathbf{x}.
\end{equation}

\emph{(b) Velocity training (flow matching, $\sigma_t=0$).} Alternatively, we train a velocity field $u^\theta_t(\mathbf{x},\mathbf{c})$ by regressing to the conditional target from \eqref{eq:conditional_vf}:
\begin{align}
\hat{u}_t(\mathbf{x},\mathbf{c}) &= \arg\min_{u^\theta_t} \mathbb{E}_{t, \mathbf{z},\bm{\epsilon}}\left[\left\|u^\theta_t(\mathbf{x}_t,\mathbf{c}) - u_t(\mathbf{x}_t\mid \mathbf{z})\right\|^2\right], \label{eq:velocity_training}
\end{align}
with the same $(t,\mathbf{z},\bm{\epsilon})$ sampling and $\mathbf{x}_t$ construction as above. 

In our primary setup, we set $\sigma_t=0$ as in \eqref{eq:sigma_zero}; since the trajectory is deterministic and depends only on the drift, we directly train $u_t$ via \eqref{eq:velocity_training} (no score is required).

\paragraph{Specific Time Schedule.} Let $n_s > 0$ be the streaming step-size parameter. We define a lower-triangular schedule as:
\begin{align}
\alpha_t^k &= \mathrm{clamp}(t-\tfrac{k}{n_s},0,1) \label{eq:alpha_schedule}\\
\beta_t^k &= 1 - \alpha_t^k \label{eq:beta_schedule}\\
\sigma_t &= 0 \label{eq:sigma_zero}\\
t &\in [0,1+\frac{K}{n_s}] \label{eq:time_range}
\end{align}

This schedule creates a cascading activation pattern where each frame progressively transitions from noise to data, enabling streaming generation (illustrated in Figure \ref{fig:noise}).

\begin{figure}[t]
    \centering
    \includegraphics[width=\linewidth]{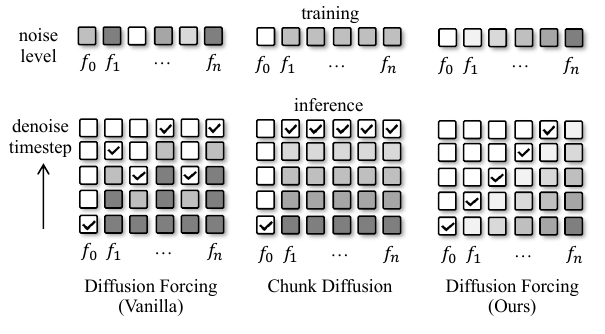}
    \caption{\textbf{Noise Schedule Comparison.} Diffusion forcing samples a random schedule with uncertain active window and mismatches train--test schedule; Chunk diffusion denoises all frames within each chunk uniformly, incurring high response latency. Our triangular schedule denoises only the active window and advances at a constant per-frame rate.}
    \label{fig:noise}
\end{figure}

\subsection{Streaming Training and Inference}
\label{sec:stream}

The vectorized time schedule enables streaming generation, but a naive implementation would still require the full sequence $\mathbf{X}_t^{0:K}$ and $\mathbf{c}^{0:K}$ at each step. We now prove that our time schedule allows frame-by-frame processing with bounded latency.

\begin{definition}[Active Window]
\label{def:active_window}
For time $t \in [0,T]$, define:
\begin{align}
m(t) &= \lceil (t - 1) \cdot n_s \rceil \quad \text{(fully denoised frames)} \label{eq:m_def}\\
n(t) &= \lceil t \cdot n_s \rceil \quad \text{(active frames)} \label{eq:n_def}
\end{align}
\end{definition}

\begin{lemma}[Schedule Saturation]
\label{lem:saturation}
Under the time schedule \eqref{eq:alpha_schedule}--\eqref{eq:beta_schedule}, for any time $t$:
\begin{enumerate}
\item If $k < m(t)$, then $\alpha_t^k = 1$ and $\beta_t^k = 0$
\item If $k \geq n(t)$, then $\alpha_t^k = 0$ and $\beta_t^k = 1$
\end{enumerate}
\end{lemma}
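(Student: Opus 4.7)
The plan is to unfold the definitions of the clamp, ceiling, and the scalar thresholds appearing in the active-window definitions, and then show each of the two cases by simple real-number arithmetic. No probabilistic or functional-analytic machinery is needed; the lemma is a bookkeeping check of the lower-triangular schedule.

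First I would fix an arbitrary $t \in [0, T]$ and an arbitrary integer frame index $k \in \{0, \dots, K-1\}$. I would invoke the basic ceiling identities $\lceil x \rceil - 1 < x \leq \lceil x \rceil$, valid for every real $x$, which will be the only nontrivial ingredient.

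For case (1), the hypothesis $k < m(t) = \lceil (t-1) n_s \rceil$ together with $k \in \mathbb{Z}$ gives $k \leq \lceil (t-1) n_s \rceil - 1$. Applying the ceiling identity to $x = (t-1) n_s$ yields $\lceil (t-1) n_s \rceil - 1 < (t-1) n_s$, so $k < (t-1) n_s$, i.e., $k/n_s < t - 1$, hence $t - k/n_s > 1$. Substituting into \eqref{eq:alpha_schedule} gives $\alpha_t^k = \mathrm{clamp}(t - k/n_s, 0, 1) = 1$, and \eqref{eq:beta_schedule} then forces $\beta_t^k = 0$.

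For case (2), the hypothesis $k \geq n(t) = \lceil t n_s \rceil$ combined with $\lceil t n_s \rceil \geq t n_s$ gives $k \geq t n_s$, i.e., $k/n_s \geq t$, hence $t - k/n_s \leq 0$. Then \eqref{eq:alpha_schedule} yields $\alpha_t^k = \mathrm{clamp}(t - k/n_s, 0, 1) = 0$ and \eqref{eq:beta_schedule} gives $\beta_t^k = 1$.

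The only real obstacle is the direction of strictness: case (1) uses that $k$ is a strict integer under-estimate of a ceiling (which is why the strict inequality $\lceil x \rceil - 1 < x$ is needed), whereas case (2) uses a non-strict over-estimate (which only needs $x \leq \lceil x \rceil$). Beyond handling this asymmetry carefully, the argument is a routine unpacking of the clamp, so I would keep the proof short and do both cases in parallel.
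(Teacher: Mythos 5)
Your proof is correct and takes essentially the same approach as the paper's: fix $t$ and $k$, unpack the $\mathrm{clamp}$, and use ceiling properties to show $t - k/n_s > 1$ in case (1) and $t - k/n_s \leq 0$ in case (2). You spell out the ceiling inequality $\lceil x \rceil - 1 < x \leq \lceil x \rceil$ more explicitly than the paper does, which is a slight improvement in rigor but not a different route.
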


\begin{assumption}[Causal Dependency]\label{assump:causal}
For any $l < K$, the prefix frames depend only on the corresponding prefix of the control signal:
\begin{equation}
p_{\text{data}}(\mathbf{z}^{0:l}\mid \mathbf{c}^{0:K}) = p_{\text{data}}(\mathbf{z}^{0:l}\mid \mathbf{c}^{0:l}).
\end{equation}
\end{assumption}

This assumptions generally holds in streaming settings where future controls do not influence past data.

\begin{theorem}[Streaming Locality]
\label{thm:streaming_locality}
Under \Cref{assump:causal} and the time schedule \eqref{eq:alpha_schedule}--\eqref{eq:beta_schedule}, the velocity field satisfies:
\begin{equation}
u_t(\mathbf{X}_t, \mathbf{c}^{0:K}) = \begin{bmatrix}
\mathbf{0}^{0:m(t)} \\
u_t^{m(t):n(t)}(\mathbf{X}_t^{0:n(t)}, \mathbf{c}^{0:n(t)}) \\
\mathbf{0}^{n(t):K}
\end{bmatrix} \label{eq:streaming_locality}
\end{equation}
where $\mathbf{0}^{i:j}$ denotes a zero vector for indices $i$ to $j$.
\end{theorem}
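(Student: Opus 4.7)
My plan is to prove the theorem blockwise: I will use Lemma~\ref{lem:saturation} to fix $\alpha_t^k$ and $\beta_t^k$ on each block, compute the time derivatives $\dot{\alpha}_t^k, \dot{\beta}_t^k$, and then substitute into the conditional velocity formula \eqref{eq:conditional_vf}. For the suffix $k \geq n(t)$, both $\alpha_t^k = 0$ and $\beta_t^k = 1$ are locally constant in $t$ by \eqref{eq:alpha_schedule}--\eqref{eq:beta_schedule}, so $\dot{\alpha}_t^k = \dot{\beta}_t^k = 0$. Every coefficient in the $k$-th coordinate of $u_t(\mathbf{x} \mid \mathbf{z})$ therefore vanishes, which forces the $k$-th coordinate of the marginal velocity to zero via \eqref{eq:marginal_vf} independently of $\mathbf{c}$.

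For the prefix $k < m(t)$ the same idea works, but the formula \eqref{eq:conditional_vf} now contains $\dot{\beta}_t^k / \beta_t^k = 0/0$. I would handle this either by a one-sided limit $\beta_t^k \downarrow 0$, which is well defined because the numerator is locally $o(\beta_t^k)$ since both derivatives vanish, or by bypassing the singular formula entirely: since $X_t^k = \alpha_t^k Z^k + \beta_t^k \epsilon^k$ and both schedules are locally constant near $t$, we have $\dot{X}_t^k = 0$ almost surely in a neighborhood of $t$, so the frame is frozen and the conditional and hence marginal velocity must vanish. I expect this degenerate case to be the only step that requires real care; everything else is mechanical bookkeeping.

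For the active block $k \in [m(t), n(t))$, the conditional velocity $u_t^k(\mathbf{x} \mid \mathbf{z})$ depends on the full state only through the scalar pair $(x^k, z^k)$, so the marginal \eqref{eq:marginal_vf} reduces to an integral against the posterior $p(\mathbf{z} \mid \mathbf{x}, \mathbf{c})$. By Bayes this posterior factorizes through $p_t(\mathbf{x} \mid \mathbf{z}) = \prod_j \mathcal{N}(x^j; \alpha_t^j z^j, (\beta_t^j)^2 \mathbf{I})$: for $j < m(t)$ the factor is a Dirac mass enforcing $z^j = x^j$; for $j \geq n(t)$ the factor equals $\mathcal{N}(x^j; \mathbf{0}, \mathbf{I})$ and is independent of $z^j$, so integrating those coordinates out contributes only to the overall normalization. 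What remains depends on $p_{\text{data}}(z^{0:n(t)} \mid \mathbf{c}^{0:K})$, which Assumption~\ref{assump:causal} replaces by $p_{\text{data}}(z^{0:n(t)} \mid \mathbf{c}^{0:n(t)})$. Hence $u_t^k$ depends only on $\mathbf{x}^{0:n(t)}$ and $\mathbf{c}^{0:n(t)}$, and the three blocks together give \eqref{eq:streaming_locality}.
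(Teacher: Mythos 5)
Your blockwise strategy is essentially the same as the paper's: both arguments partition indices into finalized ($k<m(t)$), active ($m(t)\le k<n(t)$), and future ($k\ge n(t)$) regions, use Lemma~\ref{lem:saturation} to saturate the schedule on the outer blocks, and apply Assumption~\ref{assump:causal} to cut the control dependence on the active block. The paper additionally channels everything through the posterior mean $g_t(\mathbf{x},\mathbf{c})=\mathbb{E}[\mathbf{z}\mid\mathbf{x},\mathbf{c}]$ so that the argument covers velocity, noise, data, and score parameterizations in one pass, but for the velocity statement itself that is organizational rather than substantive. Your treatment of the active block --- posterior factorization where the prefix collapses to a Dirac mass, the suffix factor is $\mathbf{z}$-independent and drops as normalization, and Assumption~\ref{assump:causal} trims the control to $\mathbf{c}^{0:n(t)}$ --- matches the paper's Region~2 argument.

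One step as you wrote it would not survive scrutiny: for the prefix $k<m(t)$ you suggest resolving the $\dot\beta_t^k/\beta_t^k = 0/0$ by a one-sided limit, claiming ``the numerator is locally $o(\beta_t^k)$ since both derivatives vanish.'' That is not what happens on the schedule \eqref{eq:alpha_schedule}: approaching the saturation boundary from within the active zone (say $t \uparrow 1 + k/n_s$) one has $\beta_t^k\downarrow 0$ but $\dot\beta_t^k=-1$ identically, so the ratio $\dot\beta_t^k/\beta_t^k \to -\infty$; the individual coefficients $\mathbf{a}_t^k$ and $\mathbf{b}_t^k$ both diverge, and only their sum $\mathbf{a}_t^k+\mathbf{b}_t^k = \dot\alpha_t^k + \dot\beta_t^k = 0$ (using $\alpha_t^k+\beta_t^k\equiv1$) stays finite after being multiplied against $g_t^k=x^k$. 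Your second alternative --- observing that on the conditional path $u_t^k = \dot\alpha_t^k z^k + \dot\beta_t^k \epsilon^k$ is exactly zero once the schedule saturates, so the coordinate is frozen and both conditional and marginal velocity vanish without ever forming the singular ratio --- is correct, and is actually tighter than the paper's one-line assertion ``the time derivatives vanish, yielding zero velocity'' (the paper defers the $0/0$ issue to the Stochastic Extension remark). If you state the prefix argument via the frozen-coordinate route and drop the $o(\beta_t^k)$ reasoning, the proposal is sound.
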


This theorem implies an efficient implementation: at time $t$, we only compute $u_t$ on the active window $[m(t), n(t))$, requiring only $\mathbf{X}_t^{0:n(t)}$ and $\mathbf{c}^{0:n(t)}$. Frames before $m(t)$ are finalized and frames after $n(t)$ remain pure noise. This yields $1$ frame streaming latency or $\frac{N}{n_s}$ steps and control-response latency is bounded by $n_s$ frames or N steps. We achieve all above while keeping exact likelihood under our schedule (instead of some kind of ELBO), which is different from prior works like diffusion forcing. We provide the algorithm pipeline in \cref{alg:flow_training} and \cref{alg:stream_inference}.

\begin{remark}[Why the triangle matters]
The hard saturation $\alpha_t^k\in\{0,1\}$ outside $[m(t),n(t))$ gives a deterministic, finite cutoff $n_s$ beyond which coordinates are provably pure noise. Without this triangular schedule (\textit{e.g.}, with a random schedule or undeterministic decreasing schedule), there is no fixed finite index after which coordinates are independent of data, so the exact factorization and the simplification in Theorem~\ref{thm:streaming_locality} do not hold.
\end{remark}

\begin{remark}[Bidirectional attention is important] Because the effective context at time $t$ is an interval $[0, n(t))$ (not a strictly causal prefix), a diffusion Transformer instantiated for $u_t$ or $s_t$ should employ bidirectional self-attention within this window. A strictly causal mask would discard the (useful and admissible) future context $k\in[0,n(t))$ and thus be suboptimal under our model assumptions.
\end{remark}

\begin{algorithm}[H]
\footnotesize
\caption{Flood Training}
\label{alg:flow_training}
\begin{algorithmic}[1]
\STATE \textbf{Input:} dataset $\mathcal{D}$ of $(\mathbf{c}^{0:K}, \mathbf{z}^{0:K})$; schedules $\{\alpha_t^k,\beta_t^k\}_{k=0}^{K-1}$; streaming step $n_s$
\FOR{iter $=1$ to $N_{\text{iter}}$}
\STATE sample $(\mathbf{c},\mathbf{z}) \sim \mathcal{D}$; sample $t\sim\mathrm{Unif}(0,T)$; sample $\bm{\epsilon}\sim\mathcal{N}(\mathbf{0},\mathbf{I})$
\STATE $\mathbf{x}_t \leftarrow \bm{\alpha}_t \odot \mathbf{z} + \bm{\beta}_t \odot \bm{\epsilon}$
\STATE $m \leftarrow \lceil (t-1) n_s \rceil$; $n \leftarrow \lceil t n_s \rceil$; $A \leftarrow \{m,\dots,n-1\}$ 
\STATE window $\mathbf{x} \leftarrow \mathbf{x}_t^{0:n}$; window $\mathbf{c} \leftarrow \mathbf{c}^{0:n}$; window $\mathbf{z} \leftarrow \mathbf{z}^{0:n}$
\STATE $\mathbf{u}^{A}_{\text{target}} \leftarrow u_t(\mathbf{x}_t^{0:n} \mid \mathbf{z}^{0:n})[A]$ \COMMENT{Eq.~\eqref{eq:conditional_vf}}
\STATE $\mathbf{u}^{A}_{\text{pred}} \leftarrow u^\theta_t(\mathbf{x}_t^{0:n},\mathbf{c}^{0:n})[A]$
\STATE $L \leftarrow \|\mathbf{u}^{A}_{\text{pred}} - \mathbf{u}^{A}_{\text{target}}\|^2$
\STATE update $\theta \leftarrow \theta - \eta \, \nabla_\theta L$
\ENDFOR
\STATE \textbf{Output:} parameters $\theta$ of velocity field
\end{algorithmic}
\end{algorithm}

\begin{algorithm}[H]
\footnotesize
\caption{Flood Inference ($\sigma_t=0$)}
\label{alg:stream_inference}
\begin{algorithmic}[1]
\STATE \textbf{Input:} control $\mathbf{c}^{0:K}$; trained $u^\theta_t$; schedules $\{\alpha_t^k,\beta_t^k\}_{k=0}^{K-1}$
\STATE initialize $\mathbf{x}_0 \sim \mathcal{N}(\mathbf{0},\mathbf{I})$
\FOR{$t=0$ to $T$ step $\Delta t$}
\STATE $m \leftarrow \lceil (t-1) n_s \rceil$; $n \leftarrow \lceil t n_s \rceil$
\STATE $A \leftarrow \{m,\dots,n-1\}$
\STATE window $\mathbf{x} \leftarrow \mathbf{x}_t^{0:n}$; window $\mathbf{c} \leftarrow \mathbf{c}^{0:n}$
\STATE $\mathbf{v}_A \leftarrow u^\theta_t(\mathbf{x}_t^{0:n},\mathbf{c}^{0:n})[A]$
\STATE $\mathbf{x}_{t+\Delta t}^A \leftarrow \mathbf{x}_t^A + \Delta t \cdot \mathbf{v}_A$
\STATE keep $\mathbf{x}^{0:m}$ fixed; frames $\ge n$ remain noise
\ENDFOR
\STATE \textbf{Output:} $\mathbf{x}_T$ (fully denoised sequence)
\end{algorithmic}
\end{algorithm}

\subsection{Network Architecture}
\label{sec:network}
Our tailored diffusion forcing is instantiated on top of a latent diffusion formulation \citep{rombach2022high}, where we first train a causal variational autoencoder (VAE) to map motion sequences into a compact latent space and then run diffusion only in this latent space. This keeps the streaming latency low and lets the denoiser focus on temporal structure.

\paragraph{Causal VAE}
Different from non-streaming motion works that use VAE or VQ-VAE with bi-directional convolutions \citep{kingma2013auto,van2017neural,guo2024momask}, we adopt a strictly causal design, similar in spirit to the streaming encoder in MotionStreamer \citep{xiao2025motionstreamer}: the decoder at time $t$ does not rely on future frames. Concretely, we start from the causal VAE design used in Wan2.1 for video generation \citep{wan2025wan} and adapt all spatio-temporal blocks to 1D temporal motion sequences. Compared to Wan2.1, our version is simplified: we train it with an $\ell_2$ reconstruction loss and a standard commitment/codebook loss,
so that the encoder produces stable latents for the downstream diffusion task.

\paragraph{DiT with continuous frame-wise text conditioning}
For the latent denoiser, we build on the diffusion transformer (DiT) style backbone \citep{peebles2023scalable} as adopted in Wan2.1 \citep{wan2025wan}, i.e., a single time-embedding pathway shared across blocks, instead of per-block time MLPs. Unlike the original setting for video generation, we use uniform timestep sampling and set the flow-matching time shift to $1$ during training to match our lower-triangular forcing schedule. Text conditioning is applied per frame: we extract token features from a pretrained T5 encoder (max length 128) \citep{raffel2020exploring}, flatten them to 1D, and apply the same rotary embedding to align them with the motion tokens at the current time. Inside attention, each motion frame is only allowed to attend to the text prompt active at that time step via a biased mask. This keeps self-attention over motion clean, while still injecting the latest text, and empirically improves temporal smoothness and text–motion alignment. We provide our pipeline overview in \cref{fig:pipeline}.

\begin{figure*}[t]
    \centering
    \includegraphics[width=\linewidth]{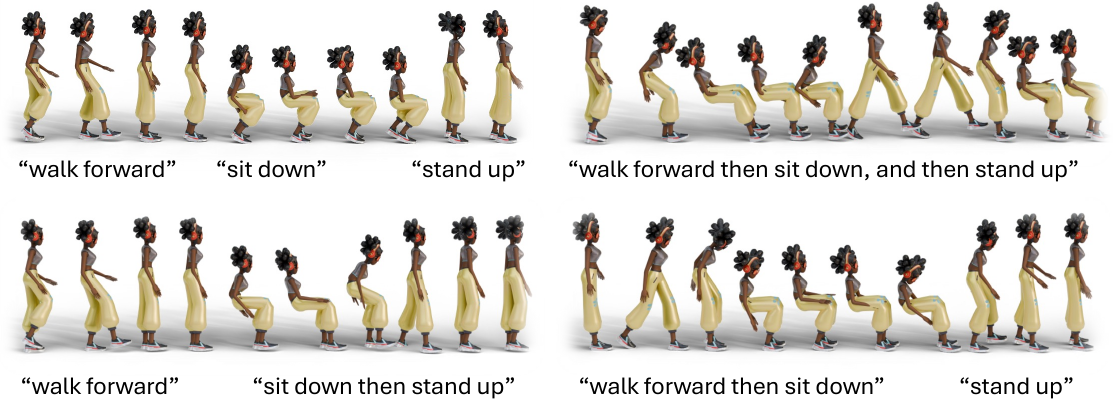}
\caption{\textbf{Comparison of time-varying conditioning.} Our model generates different resulting motions from the same text prompts based on their delivery timing. (Top Left) Prompts are given separately at different frames. (Top Right) All conditions are fed as a single prompt at once. (Bottom Left) Two separate prompts are input early in the sequence. (Bottom Right) The same two separate prompts are input later in the sequence.}
    \label{fig:result1}
\end{figure*}

\begin{figure*}[t]
    \centering
    \includegraphics[width=\linewidth]{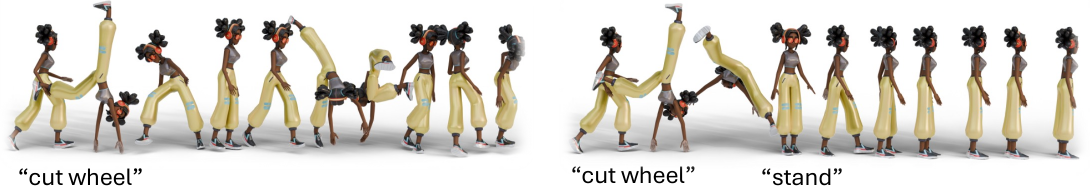}
\caption{\textbf{Comparison of long sequence generation.} (Left) our model will continue to repeat the motion in text prompt if without new prompts come. (Right) in real application, our model could stop current motion by explicitly giving the rest style prompt, such as ``stand''.}
    \label{fig:result2}
\end{figure*}

\section{Experiments}
\label{sec:experiments}

\paragraph{Datasets}
We evaluate on the two standard text-conditioned motion datasets: HumanML3D \citep{guo2022generating} and BABEL \citep{punnakkal2021babel}. 
HumanML3D is built on AMASS and HumanAct12 motions and contains about 14k text–motion pairs (roughly 26 hours), each clip paired with 1–3 natural-language descriptions. 
We use the provided $20$ fps and joint configuration. As in prior work \citep{tevet2022human,guo2024momask}, we adopt the 263-dimensional motion representation distributed with HumanML3D (global velocity, root rotation, joint rotations, foot contact), and use the released text/motion encoders for metrics. 
BABEL \citep{punnakkal2021babel} is an action-centric annotation of AMASS motion, where long motion sequences are segmented and annotated with frame-level action labels (\textit{e.g.}, walking $\rightarrow$ turn $\rightarrow$ jump). 
we follow the official train/val/test split released with these dataset.
Concretely, we follow the official BABEL split and extract continuous multi-action segments to form text–motion pairs, similar to \citep{xiao2025motionstreamer}. This setup matches our streaming assumption, \textit{i.e.}, the text prompt may change inside a single motion sequence.

\paragraph{Metrics}
We follow the evaluation protocol widely used in text-to-motion \citep{guo2022generating,tevet2022human,guo2024momask}. We evaluate Fr{\'e}chet Inception Distance (FID), R-Precision, Diversity, and Multimodality Distance (MM-Dist), Peak Jerk (PJ) and Area Under the Jerk (AUJ).
For FID, we adopt the same motion encoderas HumanML3D \citep{guo2022generating} for encoding the real motions and generated motions.
For R-Precision (top-$k$), given a generated motion, we encode it and retrieve among ground-truth text description plus mismatched texts. If the ground-truth text is within the top-$k$ ranked texts, it is counted as correct. We report R-Precision@1/2/3, following \citet{guo2022generating,tevet2022human}.
For Diversity, we measure whether the model collapses by randomly sampling several generated motions, and computing the average pairwise $\ell_2$ distance.
For Multimodality Distance (MM-Dist), we measure the distance between motions and texts. For PJ \citep{barquero2024seamless}, we calculate the maximum value throughout the transition motion over all joints, and similarly, AUJ\citep{barquero2024seamless} is for the area under the jerk curve.

\begin{table*}[thb]
\centering
\scalebox{0.85}{
\begin{tabular}{l c c c c c c c c c}
&  & \multicolumn{6}{c}{\textbf{HumanML3D}} & \multicolumn{2}{c}{\textbf{BABEL}}\\
\cmidrule(lr){3-8}\cmidrule(lr){9-10}
 & stream & R@1$\uparrow$ & R@2$\uparrow$ & R@3$\uparrow$ & FID$\downarrow$ & MM-Dist$\downarrow$ & Diversity$\rightarrow$ & PJ$\rightarrow$ & AUJ$\downarrow$\\
\midrule
Real motion & ~ & \et{0.511}{.003} & \et{0.703}{.003} & \et{0.797}{.002} & \et{0.002}{.000} & \et{2.974}{.008} & \et{9.503}{.065} & \et{1.100}{.000} & \et{41.20}{.000}\\
VAE reconstruction & ~ & - & - & - & \et{0.029}{.000} & - & \et{9.462}{.031} & - & - \\
TM2T \citep{guo2022tm2t} & ~ & \et{0.424}{.003} & \et{0.618}{.003} & \et{0.729}{.002} & \et{1.501}{.017} & \et{3.467}{.011} & \et{8.589}{.076} & - & - \\
T2M \citep{guo2022generating} ~ & ~ & \et{0.455}{.003} & \et{0.636}{.003} & \et{0.736}{.002} & \et{1.087}{.021} & \et{3.347}{.008} & \et{9.175}{.083} & - & - \\
MDM \citep{tevet2022human} & ~ & - & - & \et{0.611}{.007} & \et{0.544}{.044} & \et{5.566}{.027} & \ets{9.559}{.086} & - & - \\
MLD \citep{chen2023executing} & ~ & \et{0.481}{.003} & \et{0.673}{.003} & \et{0.772}{.002} & \et{0.473}{.013} & \et{3.196}{.010} & \et{9.724}{.082} & - & - \\
MotionDiffuse \citep{zhang2024motiondiffuse} & ~ & \et{0.491}{.001} & \et{0.681}{.001} & \et{0.782}{.001} & \et{0.630}{.001} & \et{3.113}{.001} & \et{9.410}{.049} & - & - \\
T2M-GPT \citep{zhang2023generating} & ~ & \et{0.492}{.003} & \et{0.679}{.002} & \et{0.775}{.002} & \et{0.141}{.005} & \et{3.121}{.009} & \et{9.722}{.082} & --& - \\
ReMoDiffuse \citep{zhang2023remodiffuse} & ~ & \et{0.510}{.005} & \et{0.698}{.006} & \et{0.795}{.004} & \et{0.103}{.004} & \et{2.974}{.016} & \et{9.018}{.075} & - & - \\
MoMask \citep{guo2024momask} & ~ & \ets{0.521}{.002} & \ets{0.713}{.002} & \ets{0.807}{.002} & \etb{0.045}{.002} & \et{2.958}{.008} & \et{9.677}{.032} & - & - \\
\cline{1-10}
PRIMAL \citep{zhang2025primal} & \cmark & \et{0.497}{.003} & \et{0.681}{.003} & \et{0.780}{.002} & \et{0.511}{.006} & \et{3.120}{.019} & \etb{9.520}{.068} & \et{1.304}{.105} & \et{19.36}{.969} \\
MotionStreamer \citep{xiao2025motionstreamer} & \cmark & \et{0.513}{.003} & \et{0.705}{.006} & \et{0.802}{.003} & \et{0.092}{.003} & \ets{2.909}{.015} & \et{9.722}{.037} & \ets{0.912}{.049} & \ets{16.57}{.762} \\
\textbf{FloodDiffusion} & \cmark & \etb{0.523}{.002} & \etb{0.717}{.002} & \etb{0.810}{.003} & \ets{0.057}{.002} & \etb{2.887}{.007} & \et{9.579}{.062} & \etb{0.713}{.039} & \etb{14.05}{.663}  \\
\end{tabular}
}
\caption{\textbf{Quantitative evaluation on HumanML3D and BABEL test sets.} We report alignment (R@k$\uparrow$), quality (FID$\downarrow$), multimodality (MM-Dist$\downarrow$), and streaming quality (PJ$\rightarrow$, AUJ$\downarrow$). $\rightarrow$ means closer to `Real motion' is better; $\pm$ indicates 95\% confidence intervals; `–' means not applicable. We compare against SOTA non-streaming (\texttt{MoMask} \citep{guo2024momask}), etc., and streaming (\texttt{PRIMAL} \citep{zhang2025primal}, \texttt{MotionStreamer} \citep{xiao2025motionstreamer}) methods. FloodDiffusion achieves the best R@k and MM-Dist, a competitive FID (0.057) on HumanML3D, and outperforms all streaming baselines on BABEL.}
\label{tab:quantitative_eval}
\end{table*}

\begin{table}[t]
\centering
\scalebox{0.83}{
\setlength{\tabcolsep}{6pt}
\begin{tabular}{l c c c}
& Preference$\uparrow$ & Transition$\uparrow$ & Consistency$\uparrow$ \\
\midrule
Real motion                & 0.224 & 0.299 & 0.280 \\
MotionStreamer \citep{xiao2025motionstreamer}    & -0.338 & -0.136 & -0.055 \\
PRIMAL \citep{zhang2025primal}            & -0.599 & -0.315 & -0.203 \\
FloodDiffusion     & \textbf{0.024} & \textbf{0.152} & \textbf{-0.021} \\
\end{tabular}
}
\caption{\textbf{Bradley--Terry user study with 100 participants.}
Three generative models (PRIMAL, MotionStreamer, FloodDiffusion)  
are compared against ground-truth motion across three perceptual metrics.}
\label{tab:bt_results}
\end{table}

\begin{table}[t]
\centering
\scalebox{0.90}{
\setlength{\tabcolsep}{6pt}
\begin{tabular}{l c c c c}
& FID$\downarrow$ & R@3$\uparrow$ & MM-Dist$\downarrow$ & Div$\rightarrow$ \\
\midrule
Real motion         & 0.002 & 0.797 & 2.974 & 9.503 \\
w/o bi-direction    & 3.377 & 0.625 & 4.296  & 7.942  \\
w/o low triangle    & 3.883 & 0.532 & 4.651  & 8.497  \\
Ours    & 0.057 & 0.810 & 2.887  & 9.632  \\
\end{tabular}
}
\caption{\textbf{Ablation on core design choices.} We evaluate the impact of removing bi-directional attention (`w/o bi-direction') and the lower-triangular time scheduler (`w/o low triangle'). Results show that removing either results in a significant performance collapse across all metrics, especially FID (from 0.057 to 3.377 and 3.883, respectively).}
\label{tab:abl_singlecol}
\end{table}

\begin{table}[t]
\centering
\scalebox{0.80}{
\setlength{\tabcolsep}{6pt}
\begin{tabular}{l c c c c}
& ACCEL$\downarrow$ & MPJPE$\uparrow$ & PAMPJPE$\downarrow$ & FID$\downarrow$ \\
\midrule
VQ-VAE \citep{zhang2023generating}    & 0.0062 & 0.0545 & 0.0341  & 0.1259  \\
CausalVAE \citep{xiao2025motionstreamer}    & 0.0103 & 0.0576 & 0.0351  & 0.0271  \\
Wan CausalVAE \citep{wan2025wan} & 0.0094 & 0.0555 & 0.0333  & 0.0290  \\
\end{tabular}
}
\caption{\textbf{Ablation on Causal VAE architectures.} We compare VAE reconstruction quality using VQ-VAE \citep{zhang2023generating}, MotionStreamer's CausalVAE \citep{xiao2025motionstreamer}, and our adopted Wan CausalVAE \citep{wan2025wan}. The results show comparable performance across the different causal variants.}
\label{tab:abl_singlecol2}
\end{table}

\paragraph{Training Settings}
For our Causal VAE, we use a temporal downsampling factor of $4$ and set the latent channel dimension to $4$. We train it with AdamW (learning rate $2\times10^{-4}$) using a constant schedule with a warm-up of $T_{\text{warm}}=1000$, for a total of $300\text{K}$ steps. The diffusion backbone uses a streaming slope size $n_s=5$. It is optimized with AdamW (learning rate $1\times10^{-3}$) and a cosine annealing schedule with $T_{\max}=1000$, trained for $50\text{K}$ steps. Both models are trained on H200 GPUs.

We first train on HumanML3D alone to evaluate the standard text-to-motion setting, where every frame in a clip shares the same text description, following \citep{guo2022generating,tevet2022human}. Then we train a joint model on HumanML3D and BABEL \citep{punnakkal2021babel} to evaluate scenarios with time-varying prompts; in this setting, each frame (or short segment) is conditioned on its corresponding text, requiring the model to update the motion immediately as the text changes. In both training settings, the model operates in streaming mode, denoising the current buffer and enabling frame-by-frame output without waiting for the full sequence.

\subsection{Comparison to state-of-the-art approaches}
Our evaluations are divided into two parts: non-streaming evaluation on HumanML3D and streaming evaluation on BABEL. For each dataset, training and testing are performed on the standard splits.

\paragraph{Baselines}
We first train our model on HumanML3D for non-streaming comparison, where the input is a single text and the output is a complete motion matching the ground-truth length. We compare our method with previous state-of-the-art models up to MoMask\citep{guo2024momask}, including the retrieval-based method ReMoDiffuse\citep{zhang2023remodiffuse}, the auto-regressive (AR) method T2M-GPT \citep{zhang2023generating}, and the diffusion-based method MDM \citep{tevet2022human}.
Then, we train our model on jointly on HumanML3D and the BABEL dataset from scratch for streaming comparison. Following previous works \citep{zhang2025primal, xiao2025motionstreamer}, the evaluation inputs past motion, past text, and current text to generate the subsequent motion. We reproduce PRIMAL \citep{zhang2025primal} and MotionStreamer \citep{xiao2025motionstreamer} using their official code, adapting them to the 263D \citep{guo2022generating} HumanML3D motion representation to ensure a consistent comparison with our non-streaming results. We note that all frameworks could be trained on other motion representations, such as the 272D representation in MotionStreamer \citep{xiao2025motionstreamer} or the 330D SMPLX \citep{pavlakos2019expressive} 6D-rotation \citep{zhou2019continuity} representation.

\paragraph{Objective Evaluations}
As shown in Table \ref{tab:quantitative_eval}, our method performs better on most non-streaming metrics and leads on all streaming metrics. For the non-streaming evaluation, our method outperforms existing baselines in R-Precision and Multimodal Distance, and performs on par with MoMask\citep{guo2024momask} in FID. For streaming-specific metrics, our method achieves a PJ closest to real motion and performs better on AUJ. Overall, the results demonstrate our method can generate streaming motion at a state-of-the-art quality.

\paragraph{Subjective Evaluations}
We conduct a user study with 100 participants comparing three generative models
(PRIMAL, MotionStreamer, and our FloodDiffusion) against ground-truth motion.
Across all three evaluation metrics (Preference, Transition, and Consistency),
FloodDiffusion attains the highest Bradley--Terry score among the
generative baselines (Table~\ref{tab:bt_results}). Besides, we show subjective samples in Figure \ref{fig:teaser}, \ref{fig:result1} and \ref{fig:result2}.  

\subsection{Ablation Study}
\paragraph{Bi-directional attention}
We first evaluate the importance of bi-directional attention, as shown in Table \ref{tab:abl_singlecol}. Unlike full-chunk diffusion models \citep{zhang2025primal}, where switching from bi-directional to causal attention partially influences the results (\textit{e.g.}, FID increasing from 0.51 to 0.92), we found bi-directional attention is critical for diffusion forcing. In our framework, using causal attention causes the FID to degrade significantly, from 0.054 to 3.377. This suggests that the denoising process requires the full context of the active window, especially when frames within that window are at different timesteps.

\paragraph{Lower-triangular timestep sampling scheduler}
We then compare the random timestep sampling scheduler used in vanilla diffusion forcing with our deterministic lower-triangular scheduler. Both experiments are conducted using bi-directional attention. As shown in Table \ref{tab:abl_singlecol}, we found that the random scheduler results in a significantly higher FID (3.883) compared to our lower-triangular scheduler (0.057) after 1M training iterations. This indicates that the structured, cascading noise schedule is highly beneficial for this task.

\begin{figure}[t]
\centering
\includegraphics[width=\columnwidth]{./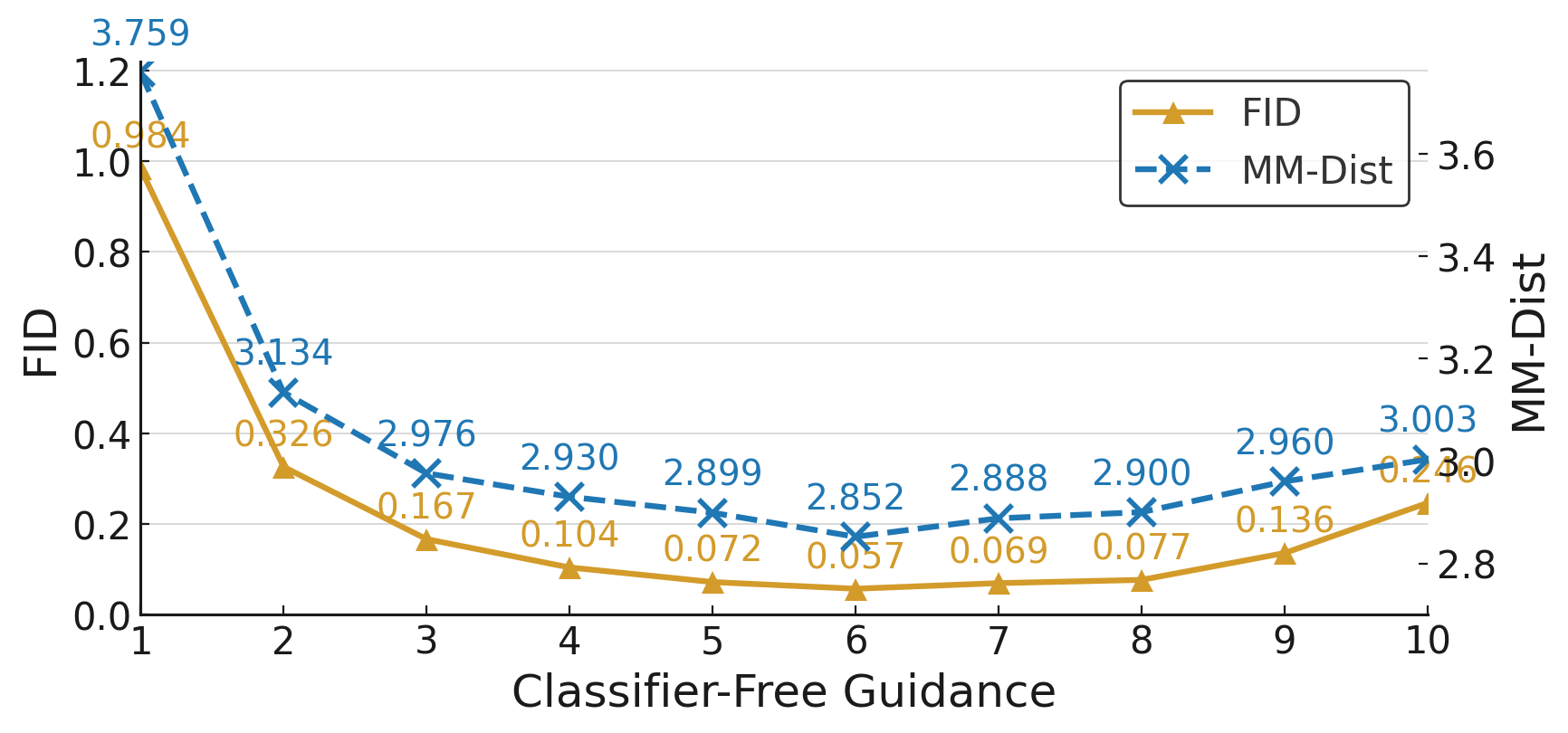}
\caption{\textbf{Effect of Classifier-Free Guidance (CFG) scale.} We report FID (left axis, $\downarrow$) and MM-Dist (right axis, $\downarrow$) with the CFG scale. Both metrics improve significantly as the scale increases from 1, achieving an optimal trade-off at CFG=6 (FID=0.057, MM-Dist=2.852). }
\label{fig:cfg_ablation}
\end{figure}

\paragraph{Classifier-free guidance}
We report the performance of our model under different classifier-free guidance (CFG) settings. Our best results in this paper are obtained with a text CFG scale of 6. As shown in the figure \ref{fig:cfg_ablation}, and similar to previous work, the model performance (FID and Multimodal Distance) degrades when CFG is not used.

\paragraph{Choice of Causal VAE Architecture}
We report the performance differences for various Causal VAE architectural choices. As shown in Table \ref{tab:abl_singlecol2}, the performance of different Causal VAE structures is similar when the latent dimension is the same. The results in the table are from a latent dimension of 4. Therefore, we keep the Wan-based Causal VAE to maintain architectural similarity with the video generation community.   

\subsection{Discussion}
\paragraph{Advantage of diffusion forcing for streaming systems}
We discuss the differences between our approach and other streaming architectures.
(i) Training-inference mismatch. Chunk-based diffusion and AR models with diffusion heads are often trained to complete full motions, while streaming is achieved via inference-time optimizations, such as stopping the current generation and refreshing with new conditions. In contrast, our method is trained directly on time-varying text conditions and does not require manual detection of new prompts, stopping, or refreshing.
(ii) Latency and window size trade-off. Similar to AR-based methods, diffusion forcing can leverage a long context window (\textit{e.g.}, 100 frames of history) while generating new motion with low latency (\textit{e.g.}, 5 new frames). This avoids the high "first-token" latency seen in chunk-based diffusion, which must generate a full chunk before outputting.

\paragraph{Limitation}
Our system does not include instruction finetuning or explicit semantic memory of past generated motion. This makes it challenging to follow text prompts that require understanding past motion, such as ``repeat your last action''. One potential solution is to use an auxiliary LLM system to interpret these text prompts into specific commands. Also, due to the lack of long-term stylized data, the model was not trained or objectively evaluated on style shifting or long-term consistency issues. A potential solution is obtaining pseudo-labeled long-motion data from video or games for training.

\section{Conclusions}
\label{sec:conclusions}
In this paper, we introduce a new streaming human motion generation framework based on diffusion forcing. We improved the vanilla diffusion forcing on both the attention architecture and the training timestep sampling scheduler. We found that with these modifications, a diffusion forcing framework can work effectively on this task and achieve state-of-the-art performance. We also provided proof that our modifications guarantee the network correctly models the output distribution. Since the proposed framework can generally accept different time-varying conditions, future work includes introducing and fusing more conditions such as audio, force, and environment feedbacks.

{
    \small
    \bibliographystyle{ieeenat_fullname}
    \bibliography{main}
}

\newpage
\clearpage
\setcounter{page}{1}
\maketitlesupplementary
\appendix

\noindent This supplemental document contains four sections: 
\begin{itemize}[leftmargin=*]

\item Video and Codes (Section \ref{sec:sup1}). 

\item Detailed Proofs of Theoretical Results (Section \ref{sec:sup2}).

\item Baseline Implementation Details (Section \ref{sec:sup3}).

\item Hyper-parameter Search (Section \ref{sec:sup4}).

\item Details of User Study (Section \ref{sec:sup5}).

\end{itemize}

\section{Video and Codes}
\label{sec:sup1}

We provide a project page that contains separate videos to demonstrate the performance of our system, including:

\begin{itemize}
    \item Results for streaming human motion generation.
    \item Influence on the position to giving the text prompt. 
    \item Stop the motion via neural language command.
    \item Compare with non-streaming methods. 
    \item Compare with other streaming methods.
    \item Ablation study for the bi-directional attention and time scheduler. 
\end{itemize}

\section{Detailed Proofs of Theoretical Results}
\label{sec:sup2}

\begin{proposition}[Vectorized Conditional Dynamics]
\label{prop:conditional_dynamics_appendix}
This proposition restates Proposition~\ref{prop:conditional_dynamics} in the main text. Under the vectorized time schedule, the conditional vector field and score function of the Gaussian path
\begin{equation}
    p_t(\mathbf{x}\mid \mathbf{z}) = \prod_{k=0}^{K-1} \mathcal{N}\big(x^k; \alpha_t^k z^k, (\beta_t^k)^2\mathbf{I}\big)
\end{equation}
are given by
\begin{align}
u_t(\mathbf{x}\mid \mathbf{z}) &= \left(\dot{\bm{\alpha}}_t - \frac{\dot{\bm{\beta}}_t}{\bm{\beta}_t}\odot \bm{\alpha}_t\right) \odot \mathbf{z} + \left(\frac{\dot{\bm{\beta}}_t}{\bm{\beta}_t}\right) \odot \mathbf{x}\\
s_t(\mathbf{x}\mid \mathbf{z}) &= -\frac{(\mathbf{x}-\bm{\alpha}_t \odot \mathbf{z})}{\bm{\beta}_t^2}
\end{align}
\end{proposition}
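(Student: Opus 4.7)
The plan is to reduce the vectorized claim to the well-known scalar case by exploiting the product structure of the conditional density. Because $p_t(\mathbf{x}\mid\mathbf{z})$ factorizes across the $K$ coordinates, the continuity equation decouples into $K$ independent 1D problems, one per index $k$. Thus it suffices to derive, for a single coordinate, the conditional velocity and score for the Gaussian path $p_t^k(x^k\mid z^k)=\mathcal{N}(x^k;\alpha_t^k z^k,(\beta_t^k)^2\mathbf{I})$, then collect the results into vector form using $\odot$.

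First I would establish the per-coordinate velocity via the standard flow-matching construction. Define the coordinate-wise interpolation $X_t^k = \alpha_t^k z^k + \beta_t^k \epsilon^k$ with $\epsilon^k\sim\mathcal{N}(\mathbf{0},\mathbf{I})$; by the boundary conditions in Definition~\ref{def:vectorized_schedule}, the law of $X_t^k$ given $z^k$ matches the prescribed Gaussian marginal at every $t$. Differentiating in $t$ gives $\dot X_t^k = \dot\alpha_t^k z^k + \dot\beta_t^k \epsilon^k$, and eliminating $\epsilon^k$ via $\epsilon^k = (X_t^k-\alpha_t^k z^k)/\beta_t^k$ yields
\begin{equation}
\dot X_t^k \;=\; \Bigl(\dot\alpha_t^k - \tfrac{\dot\beta_t^k}{\beta_t^k}\alpha_t^k\Bigr) z^k + \tfrac{\dot\beta_t^k}{\beta_t^k} X_t^k,
\end{equation}
which is the per-coordinate conditional vector field. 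Stacking across $k$ and replacing scalar products by $\odot$ recovers the first displayed formula of the proposition. To confirm this is indeed the conditional velocity associated with $p_t^k(\cdot\mid z^k)$ (and not merely one of many transport maps), I would verify the continuity equation $\partial_t p_t^k + \partial_{x^k}(u_t^k p_t^k)=0$ directly by differentiating the Gaussian density, a routine check.

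Next I would handle the score. Since each factor is Gaussian,
\begin{equation}
\nabla_{x^k}\log\mathcal{N}\bigl(x^k;\alpha_t^k z^k,(\beta_t^k)^2 \mathbf{I}\bigr) \;=\; -\,\tfrac{x^k-\alpha_t^k z^k}{(\beta_t^k)^2},
\end{equation}
and because $\log p_t(\mathbf{x}\mid\mathbf{z})=\sum_k \log p_t^k(x^k\mid z^k)$, the full gradient stacks coordinate-wise into $s_t(\mathbf{x}\mid\mathbf{z}) = -(\mathbf{x}-\bm\alpha_t\odot\mathbf{z})/\bm\beta_t^2$, matching the stated formula.

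The only mild subtlety is the division by $\bm\beta_t$ at the data endpoint $t=T$, where $\beta_T^k=0$. I would address this by restricting the derivation to $t\in[0,T)$ (the open interval on which the density is strictly positive) and noting that the conditional velocity and score are defined almost everywhere along the path, which is all that is needed for the downstream marginal expressions in Definition~\ref{def:marginal_dynamics} and Theorem~\ref{thm:conditional_generation}. No step requires heavy computation; the main obstacle, such as it is, is simply being careful that the elementwise schedule $\bm\beta_t$ has no zeros on the interval where the dynamics are invoked, so that the $\odot$-division in both formulas is well-defined.
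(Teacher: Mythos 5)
Your proposal is correct and follows essentially the same route as the paper's proof: both exploit the per-coordinate factorization of the Gaussian path to reduce the claim to $K$ independent scalar problems, and both read off the score directly from the standard Gaussian log-gradient. The only presentational difference is that you derive the scalar velocity constructively from the interpolant $X_t^k=\alpha_t^k z^k+\beta_t^k\epsilon^k$ and relegate the continuity-equation check to a routine verification, whereas the paper carries out the continuity-equation verification in detail for the product density (reducing the vector PDE to $K$ scalar ones) and cites the scalar Gaussian flow as known; your remark about restricting to $t\in[0,T)$ where $\bm{\beta}_t$ is entrywise positive is a careful addition that the paper leaves implicit.
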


\begin{proof}
Since each dimension is independent in the Gaussian distribution, we have
\begin{equation}
    p_t(\bx\mid \bz)
    = \mathcal{N}\!\big(\bx;\, \bm{\alpha}_t \odot \bz, \operatorname{diag}(\bm{\beta}_t^2)\big)
    = \prod_{i=1}^K p_t^i(x^i\mid z^i)
\end{equation}
where $p_t^i(x^i\mid z^i)$ denotes the marginal density of the $i$-th coordinate.

For a Gaussian $\mathcal{N}(x; \mu, \Sigma)$, the score is
\begin{equation}
    \nabla_x \log p(x) = -\Sigma^{-1}(x-\mu)
\end{equation}
Applying this to our diagonal-covariance case gives
\begin{align}
    s_t(\bx\mid \bz) &= \nabla_{\bx} \log p_t(\bx\mid \bz) \\
    &= -\operatorname{diag}(\bm{\beta}_t^{-2})\big(\bx - \bm{\alpha}_t \odot \bz\big) \\[-0.25em]
    &= -\frac{\bx-\bm{\alpha}_t \odot \bz}{\bm{\beta}_t^2}
\end{align}
which matches the claimed expression.

We now verify that $u_t$ defines a valid probability flow field by checking the continuity equation
\begin{equation}
    \partial_t p_t(\bx) + \nabla_{\bx} \cdot \big(p_t(\bx)\, u_t(\bx)\big) = 0
\end{equation}
Using the factorization $p_t(\bx) = \prod_{i=1}^K p_t^i(x^i)$ and writing $u_t(\bx) = (u_t^1(x^1),\dots,u_t^K(x^K))$, we obtain
\begin{align}
    \partial_t p_t(\bx)
    &= \partial_t \Big(\prod_{i=1}^K p_t^i(x^i)\Big) \\
    &= \sum_{i=1}^K \Bigg[\Big(\partial_t p_t^i(x^i)\Big)\prod_{j\neq i} p_t^j(x^j)\Bigg]
\end{align}
\begin{align}
    &\nabla_{\bx} \cdot \big(p_t(\bx)\, u_t(\bx)\big) \\
    &= \sum_{i=1}^K \partial_{x^i}\Big(p_t(\bx)\, u_t^i(x^i)\Big) \\
    &= \sum_{i=1}^K \partial_{x^i}\Bigg(\Big[\prod_{j=1}^K p_t^j(x^j)\Big] u_t^i(x^i)\Bigg) \\
    &= \sum_{i=1}^K \Bigg[\Big(\partial_{x^i} p_t^i(x^i)\Big) u_t^i(x^i)\Bigg]\prod_{j\neq i} p_t^j(x^j) \\
    &\quad + \sum_{i=1}^K \Bigg[p_t^i(x^i)\, \partial_{x^i} u_t^i(x^i)\Bigg]\prod_{j\neq i} p_t^j(x^j)
\end{align}
Therefore,
\begin{align}
    &\partial_t p_t(\bx) + \nabla_{\bx} \cdot \big(p_t(\bx)\, u_t(\bx)\big) \\
    &= \sum_{i=1}^K \Bigg[\prod_{j\neq i} p_t^j(x^j)\Bigg]
      \Big(\partial_t p_t^i(x^i) + \partial_{x^i}\big(p_t^i(x^i)\, u_t^i(x^i)\big)\Big)
\end{align}
The term in parentheses is exactly the one-dimensional continuity equation for the $i$-th coordinate, which holds by construction of the scalar Gaussian flow. 

Hence each summand is zero, and the whole sum vanishes, proving that $u_t$ is a valid vector field for the factorized Gaussian path.
\end{proof}

\begin{theorem}[Conditional Generation]
\label{thm:conditional_generation_appendix}
This theorem restates Theorem~\ref{thm:conditional_generation} in the main text. Consider the SDE
\begin{align}
\mathbf{X}_0 &\sim p_{\text{init}},\\
\mathrm{d}\mathbf{X}_t &= \left[u_t(\mathbf{X}_t,\mathbf{c})+\frac{\bm{\sigma}_t^2}{2}s_t(\mathbf{X}_t,\mathbf{c})\right]\mathrm{d}t+\sigma_t\mathrm{d}\mathbf{W}_t
\end{align}
where $u_t$ and $s_t$ are respectively the marginal vector field and score of the path $p_t(\mathbf{x},\mathbf{c})$. Then the marginal law of $\mathbf{X}_t$ satisfies $\mathbf{X}_t \sim p_t(\cdot \mid \mathbf{c})$ for all $t$, and in particular $\mathbf{X}_T \sim p_{\text{data}}(\cdot \mid \mathbf{c})$.
\end{theorem}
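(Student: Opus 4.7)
The plan is to prove the marginal law statement via the Fokker--Planck equation associated with the SDE and then invoke uniqueness. Any It{\^o} diffusion $\mathrm{d}\mathbf{X}_t = b_t(\mathbf{X}_t)\,\mathrm{d}t + \sigma_t\,\mathrm{d}\mathbf{W}_t$ has its density $q_t$ evolving according to $\partial_t q_t = -\nabla\cdot(b_t q_t) + \tfrac{\sigma_t^2}{2}\Delta q_t$. So the task reduces to showing that $p_t(\mathbf{x}\mid\mathbf{c})$ satisfies exactly this PDE for the specific drift $b_t = u_t + \tfrac{\sigma_t^2}{2}s_t$, with initial data $p_0(\cdot\mid\mathbf{c}) = p_{\text{init}}$, and then matching the two solutions.

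The first step is to establish the marginal continuity equation $\partial_t p_t(\mathbf{x}\mid\mathbf{c}) + \nabla\cdot\bigl(u_t(\mathbf{x},\mathbf{c})\,p_t(\mathbf{x}\mid\mathbf{c})\bigr) = 0$. Proposition~\ref{prop:conditional_dynamics_appendix} already gives the conditional version $\partial_t p_t(\mathbf{x}\mid\mathbf{z}) + \nabla\cdot\bigl(u_t(\mathbf{x}\mid\mathbf{z})\,p_t(\mathbf{x}\mid\mathbf{z})\bigr) = 0$. Integrating this against $p_{\text{data}}(\mathbf{z}\mid\mathbf{c})$, and swapping the $\mathbf{z}$-integral with the $t$- and $\mathbf{x}$-derivatives (via dominated convergence on the Gaussian kernels), produces $\partial_t p_t(\mathbf{x}\mid\mathbf{c})$ on one hand, and $\nabla\cdot(u_t(\mathbf{x},\mathbf{c})\,p_t(\mathbf{x}\mid\mathbf{c}))$ on the other; the latter uses exactly the defining expression of the marginal vector field from Definition~\ref{def:marginal_dynamics}, which can be rewritten as the posterior expectation of $u_t(\mathbf{x}\mid\mathbf{z})$ under $p_t(\mathbf{z}\mid\mathbf{x},\mathbf{c}) \propto p_t(\mathbf{x}\mid\mathbf{z})p_{\text{data}}(\mathbf{z}\mid\mathbf{c})$.

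The second step is the score identity: verifying that $s_t(\mathbf{x},\mathbf{c})$ from Definition~\ref{def:marginal_dynamics} equals $\nabla_{\mathbf{x}}\log p_t(\mathbf{x}\mid\mathbf{c})$. This follows by substituting $s_t(\mathbf{x}\mid\mathbf{z}) = \nabla_{\mathbf{x}}\log p_t(\mathbf{x}\mid\mathbf{z})$ inside the integral and recognising the result as $\nabla_{\mathbf{x}} p_t(\mathbf{x}\mid\mathbf{c})/p_t(\mathbf{x}\mid\mathbf{c})$. Consequently $p_t\, s_t = \nabla p_t$, so $\nabla\cdot(p_t s_t) = \Delta p_t$, and adding $\tfrac{\sigma_t^2}{2}$ times this identity to the marginal continuity equation produces the Fokker--Planck equation with drift $b_t = u_t + \tfrac{\sigma_t^2}{2}s_t$. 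Matching initial data at $t=0$ (the boundary conditions $\alpha_0^k=0, \beta_0^k=1$ give $p_0(\cdot\mid\mathbf{c}) = \mathcal{N}(\mathbf{0},\mathbf{I}) = p_{\text{init}}$) and invoking uniqueness of the Fokker--Planck initial-value problem then forces $\mathbf{X}_t \sim p_t(\cdot\mid\mathbf{c})$ for all $t$, and the terminal conditions $\alpha_T^k = 1, \beta_T^k = 0$ collapse the conditional kernel to a delta and yield $\mathbf{X}_T \sim p_{\text{data}}(\cdot\mid\mathbf{c})$.

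The main obstacle is the Fubini / differentiation-under-the-integral step, which needs mild integrability and tail-decay on the Gaussian kernels $p_t(\mathbf{x}\mid\mathbf{z})$ and their derivatives. These hold for $t\in(0,T)$ because the conditional covariance is strictly positive and the densities are Schwartz-class in $\mathbf{x}$, but the boundary $t\to T$ where $\beta_t^k\to 0$ requires a short weak-limit argument since $p_T(\mathbf{x}\mid\mathbf{z})$ collapses to a Dirac mass; concretely I would argue on $[0,T-\varepsilon]$ and let $\varepsilon\to 0$. A secondary subtlety is invoking uniqueness of the Fokker--Planck PDE; the cleanest route is to appeal to the standard result that for Lipschitz drift and bounded diffusion coefficients the weak solution is unique, rather than re-deriving it from scratch.
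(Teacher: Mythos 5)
Your proposal is correct and follows essentially the same route as the paper: both show that $p_t(\cdot\mid\mathbf{c})$ solves the Fokker--Planck equation for the given SDE by combining the marginal continuity equation with the marginal score identity $p_t\,s_t = \nabla_{\mathbf{x}} p_t$, and then conclude via solution uniqueness. You are somewhat more careful than the paper in two places --- you explicitly derive the marginal continuity equation by integrating the conditional one of Proposition~\ref{prop:conditional_dynamics_appendix} against $p_{\text{data}}(\mathbf{z}\mid\mathbf{c})$ (the paper simply asserts that $u_t$ generates the marginal path), and you flag the Fubini/dominated-convergence and Fokker--Planck-uniqueness hypotheses that the paper leaves implicit, including the $t\to T$ degeneracy where $\beta_t^k\to 0$.
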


\begin{proof}
We verify the claim by appealing to the Fokker--Planck equation. We first work in the general matrix-valued setting, then specialize to the diagonal case.

\paragraph{General case.}
Consider a generic SDE of the form
\begin{equation}
    \mathrm{d}\bx_t = u^{fp}_t(\bx_t)\,\mathrm{d}t + \sigma_t\,\mathrm{d}\mathbf{W}_t
\end{equation}
with initial condition $\bx_0 \sim p_{\text{init}}$, where $\sigma_t$ is a diffusion matrix (so that $\sigma_t \sigma_t^\top$ is the covariance). The Fokker--Planck equation states that the marginal density $p_t$ evolves as
\begin{equation}
        \partial_t p_t(\bx)
        = -\nabla_{\bx}\cdot\big(p_t(\bx) u^{fp}_t(\bx)\big)
            + \frac{1}{2}\nabla_{\bx}\cdot\Big(\sigma_t \sigma_t^\top \, \nabla_{\bx} p_t(\bx)\Big)
\end{equation}

To match the desired marginal $p_t(\cdot\mid\mathbf{c})$, we decompose the drift by adding and subtracting a score-correction term (we omit $\mathbf{c}$ for brevity):
\begin{align}
    u_t(\bx) &= u_t(\bx) + \frac{\sigma_t \sigma_t^\top}{2} s_t(\bx) - \frac{\sigma_t \sigma_t^\top}{2} s_t(\bx)
\end{align}
Substituting this into the FP equation yields
\begin{align}
    \partial_t p_t(\bx) &= -\nabla_{\bx}\cdot\big(p_t u_t\big) \\
    &= -\nabla_{\bx}\cdot\Big(p_t\big(u_t + \tfrac{\sigma_t \sigma_t^\top}{2} s_t\big)\Big) \\
       &\quad + \nabla_{\bx}\cdot\Big(p_t \, \tfrac{\sigma_t \sigma_t^\top}{2} s_t\Big)
\end{align}

Applying the identity $s_t = \nabla_{\bx} \log p_t$, which gives $p_t \nabla_{\bx} \log p_t = \nabla_{\bx} p_t$, we find
\begin{equation}
    \nabla_{\bx}\cdot\Big(p_t \, \tfrac{\sigma_t \sigma_t^\top}{2} s_t\Big)
    = \frac{1}{2}\nabla_{\bx}\cdot\Big(\sigma_t \sigma_t^\top \, \nabla_{\bx} p_t\Big)
\end{equation}
This precisely reproduces diffusion term in the FP equation, leaving
\begin{align}
    \partial_t p_t(\bx)
    &= -\nabla_{\bx}\cdot\Big(p_t(\bx)\big(u_t(\bx) + \tfrac{\sigma_t \sigma_t^\top}{2} s_t(\bx)\big)\Big) \\
    &\quad + \frac{1}{2}\nabla_{\bx}\cdot\Big(\sigma_t \sigma_t^\top \, \nabla_{\bx} p_t(\bx)\Big)
\end{align}

\paragraph{Specialization to diagonal diffusion.}
We now restrict to the diagonal case where $\sigma_t = \operatorname{diag}(\bm{\sigma}_t)$ and hence $\sigma_t \sigma_t^\top = \operatorname{diag}(\bm{\sigma}_t^2)$, with $\bm{\sigma}_t^2$ denoting the element-wise square of the noise standard-deviation vector. In this case, the drift correction simplifies to
\begin{equation}
    u_t(\bx,\mathbf{c}) + \tfrac{\sigma_t \sigma_t^\top}{2} s_t(\bx,\mathbf{c})
    = u_t(\bx,\mathbf{c}) + \tfrac{\bm{\sigma}_t^2}{2} s_t(\bx,\mathbf{c})
\end{equation}
where all operations are now coordinate-wise. 

Therefore, for any diagonal diffusion covariance $\Sigma_t = \operatorname{diag}(\bm{\sigma}_t^2)$, the SDE~\eqref{eq:sde} with drift $u^{fp}_t = u_t + \tfrac{\bm{\sigma}_t^2}{2} s_t$ produces the target marginal distribution. Since $u_t$ and $s_t$ are defined (Definition~\ref{def:marginal_dynamics}) to generate the marginal path $p_t(\cdot\mid \mathbf{c})$, we conclude that $\bx_t \sim p_t(\cdot\mid \mathbf{c})$ for all $t \in [0,T]$, and in particular $\bx_T \sim p_{\text{data}}(\cdot\mid \mathbf{c})$.
\end{proof}

\begin{lemma}[Schedule Saturation]
\label{lem:saturation_appendix}
This lemma restates Lemma~\ref{lem:saturation} in the main text. Under the schedule
\begin{align}
\alpha_t^k &= \mathrm{clamp}(t-\tfrac{k}{n_s},0,1), & \beta_t^k &= 1-\alpha_t^k
\end{align}
for any $t$ we have
\begin{enumerate}
    \item If $k < m(t) = \lceil (t-1)n_s \rceil$, then $\alpha_t^k = 1$ and $\beta_t^k = 0$;
    \item If $k \geq n(t) = \lceil t n_s \rceil$, then $\alpha_t^k = 0$ and $\beta_t^k = 1$.
\end{enumerate}
\end{lemma}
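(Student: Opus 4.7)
The plan is to reduce each of the two claims to a simple inequality on the clamp argument $t - k/n_s$, so that $\mathrm{clamp}(\cdot, 0, 1)$ automatically saturates at the appropriate endpoint, and then use $\beta_t^k = 1 - \alpha_t^k$ to read off the corresponding $\beta_t^k$ value. The only ingredients I need are the two standard ceiling inequalities $\lceil x \rceil - 1 < x \leq \lceil x \rceil$, together with the fact that $k$, $m(t)$ and $n(t)$ are all integers while $n_s > 0$.

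For the first claim, I would start from $k < m(t) = \lceil (t-1) n_s \rceil$ and use integrality of both sides to upgrade this to $k \leq m(t) - 1$. The left ceiling inequality then gives $m(t) - 1 < (t-1) n_s$, so chaining yields $k < (t-1) n_s$, i.e., $t - k/n_s > 1$. Since the clamp saturates whenever its argument exceeds $1$, I conclude $\alpha_t^k = 1$, and $\beta_t^k = 0$ follows from the definition.

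For the second claim, the right ceiling inequality directly gives $n(t) = \lceil t n_s \rceil \geq t n_s$. Combining with the hypothesis $k \geq n(t)$ yields $k \geq t n_s$, so $t - k/n_s \leq 0$, whence $\alpha_t^k = 0$ and $\beta_t^k = 1$.

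There is no real obstacle here; the whole argument is bookkeeping around strict versus non-strict inequalities at the integer boundary. The one point worth flagging is that claim 1 genuinely requires a \emph{strict} inequality on the clamp argument, which is obtained by combining the strict hypothesis $k < m(t)$ with the strict half $\lceil x \rceil - 1 < x$ of the ceiling bound; claim 2, by contrast, works entirely with non-strict inequalities, which is consistent with the $k \geq n(t)$ phrasing. I would write the proof as two short displays, one per claim, with no further case analysis needed.
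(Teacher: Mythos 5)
Your proof is correct and takes essentially the same approach as the paper's: reduce each claim to an inequality on the clamp argument $t - k/n_s$ via the ceiling bounds, with you simply making the integrality step ($k$ integer, so $k < \lceil x\rceil \Rightarrow k \le \lceil x\rceil - 1 < x$) explicit where the paper elides it. One minor imprecision in your closing remark: strictness on the clamp argument is not actually required, since $\mathrm{clamp}(y,0,1)=1$ already for $y\ge 1$, but deriving the strict bound is harmless.
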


\begin{proof}
Fix $t$ and $k$. By definition
\begin{equation}
    \alpha_t^k = \mathrm{clamp}\big(t-\tfrac{k}{n_s},0,1\big)
\end{equation}
If $k < m(t) = \lceil (t-1)n_s \rceil$, then $k / n_s < t-1$, hence $t - k/n_s > 1$ and $\alpha_t^k = 1$. Consequently $\beta_t^k = 1-\alpha_t^k = 0$. 

Conversely, if $k \geq n(t) = \lceil t n_s \rceil$, then $k / n_s \geq t$ so that $t-k/n_s \leq 0$ and thus $\alpha_t^k = 0$ and $\beta_t^k = 1$. This establishes both claims.
\end{proof}

\begin{theorem}[Streaming Locality]
\label{thm:streaming_locality_appendix}
This theorem restates Theorem~\ref{thm:streaming_locality} in the main text. Under Assumption~\ref{assump:causal} and the triangular schedule \eqref{eq:alpha_schedule}--\eqref{eq:beta_schedule}, the velocity field factorizes as
\begin{equation}
u_t(\mathbf{X}_t, \mathbf{c}^{0:K}) = \begin{bmatrix}
\mathbf{0}^{0:m(t)} \\
u_t^{m(t):n(t)}(\mathbf{X}_t^{0:n(t)}, \mathbf{c}^{0:n(t)}) \\
\mathbf{0}^{n(t):K}
\end{bmatrix}
\end{equation}
\end{theorem}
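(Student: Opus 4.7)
}
The plan is to split the index set into three blocks $[0,m(t))$, $[m(t),n(t))$, $[n(t),K)$ and verify the claim on each block separately. The elementwise form of the conditional velocity from Proposition~\ref{prop:conditional_dynamics_appendix} means the $k$-th coordinate of $u_t(\mathbf{x}\mid\mathbf{z})$ depends only on $(x^k,z^k)$, so the analysis can be carried out coordinate by coordinate.

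First I would dispose of the two saturated blocks. For any $k\geq n(t)$, Lemma~\ref{lem:saturation_appendix} gives $\alpha_t^k=0$ and $\beta_t^k=1$; because $\alpha_t^k=\mathrm{clamp}(t-k/n_s,0,1)$ is the clamp of an affine function, it is identically $0$ on an open neighbourhood of $t$, and similarly $\beta_t^k$ is identically $1$, so $\dot\alpha_t^k=\dot\beta_t^k=0$. Substituting into the conditional-velocity formula gives $u_t^k(\mathbf{x}\mid\mathbf{z})=0$, and this vanishing carries through the integral defining the marginal field. For $k<m(t)$ the schedules are likewise constant in a neighbourhood of $t$, so both derivatives vanish; the remaining $\dot\beta_t^k/\beta_t^k$ indeterminacy can be resolved by taking a one-sided limit from inside the active regime, or equivalently by observing that the conditional path $p_t(x^k\mid z^k)=\delta(x^k-z^k)$ is stationary in $t$, whence the coordinatewise continuity equation forces the marginal velocity to be zero.

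The substantive content is the middle block $[m(t),n(t))$. Here I would rewrite the marginal field as a posterior expectation
\begin{equation*}
u_t^k(\mathbf{x},\mathbf{c}^{0:K}) = \int u_t^k(x^k\mid z^k)\,p_t(z^k\mid \mathbf{x},\mathbf{c}^{0:K})\,\mathrm{d}z^k ,
\end{equation*}
and then exploit the factorisation of the likelihood. Since $\alpha_t^{k'}=0$ and $\beta_t^{k'}=1$ for $k'\geq n(t)$, the Gaussian factor $p_t(x^{k'}\mid z^{k'})=\mathcal{N}(x^{k'};0,\mathbf{I})$ is independent of $z^{k'}$, so
\begin{equation*}
p_t(\mathbf{x}\mid\mathbf{z}) \;=\; p_t\bigl(\mathbf{x}^{0:n(t)}\mid\mathbf{z}^{0:n(t)}\bigr)\cdot\prod_{k'\geq n(t)}\mathcal{N}(x^{k'};0,\mathbf{I}) .
\end{equation*}
Plugging this into Bayes's rule, the second factor cancels, and after marginalising $\mathbf{z}^{n(t):K}$ I get a posterior that involves only $p_{\text{data}}(\mathbf{z}^{0:n(t)}\mid\mathbf{c}^{0:K})$. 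Invoking Assumption~\ref{assump:causal} at $l=n(t)$ collapses the future controls and yields
\begin{equation*}
p_t\bigl(\mathbf{z}^{0:n(t)}\mid\mathbf{x},\mathbf{c}^{0:K}\bigr) \;=\; p_t\bigl(\mathbf{z}^{0:n(t)}\mid\mathbf{x}^{0:n(t)},\mathbf{c}^{0:n(t)}\bigr) .
\end{equation*}
Marginalising out all coordinates except $z^k$ and substituting back establishes that $u_t^k$ depends only on $\mathbf{x}^{0:n(t)}$ and $\mathbf{c}^{0:n(t)}$, as required.

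The hard part, I expect, will be the delicate treatment of the saturated blocks rather than the active window. Outside $[m(t),n(t))$ the formal expression $\dot\beta_t^k/\beta_t^k$ is of the form $0/0$, so one cannot just read off the velocity from Proposition~\ref{prop:conditional_dynamics_appendix}; the cleanest resolution is to argue via the coordinatewise continuity equation that a time-stationary marginal forces a zero velocity field, but this still requires care at the measure-zero corner times where the clamp is non-differentiable. Everything else is Bayesian bookkeeping using the product form of the Gaussian likelihood together with the causal-prior assumption.
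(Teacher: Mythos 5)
Your plan follows the paper's proof in all essentials: both partition the sequence into finalized, active, and future blocks, factor the Gaussian path as a Dirac delta times an active-window Gaussian times pure noise, invoke Assumption~\ref{assump:causal} at $l=n(t)$ to drop the future controls, and conclude that the saturated blocks carry zero velocity while the active block depends only on $\mathbf{x}^{0:n(t)}$ and $\mathbf{c}^{0:n(t)}$. The paper organizes the argument slightly differently by first reducing every prediction target (velocity, noise, data, score) to the affine form $\mathbf{a}_t\odot g_t(\mathbf{x},\mathbf{c})+\mathbf{b}_t\odot\mathbf{x}$ with $g_t=\mathbb{E}[\mathbf{z}\mid\mathbf{x},\mathbf{c}]$; this both generalizes the statement to all parameterizations and neatly resolves the $\dot\beta_t^k/\beta_t^k$ indeterminacy you flag in the finalized block, since once $g_t^{\mathbf{1}}=\mathbf{x}^{\mathbf{1}}$ is established the velocity collapses to $(\mathbf{a}_t+\mathbf{b}_t)\odot\mathbf{x}^{\mathbf{1}}$ with $\mathbf{a}_t+\mathbf{b}_t=\dot{\bm{\alpha}}_t+\dot{\bm{\beta}}_t=\mathbf{0}$, sidestepping the continuity-equation or one-sided-limit detour you propose.
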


\begin{proof}
We first observe that the conditional velocity $u_t$, noise $\epsilon_t$, data prediction $z_t$, and score $s_t$ can all be expressed as affine combinations of $\mathbf{z}$ and $\mathbf{x}$:

\begin{align}
u_t(\mathbf{x}\mid \mathbf{z}) &= \left(\dot{\bm{\alpha}}_t - \frac{\dot{\bm{\beta}}_t}{\bm{\beta}_t}\odot \bm{\alpha}_t\right) \odot \mathbf{z} + \left(\frac{\dot{\bm{\beta}}_t}{\bm{\beta}_t}\right) \odot \mathbf{x}\\
\epsilon_t(\mathbf{x}\mid \mathbf{z}) &= \left(-\frac{\bm{\alpha}_t}{\bm{\beta}_t}\right)\odot \mathbf{z} + \left(-\frac{1}{\bm{\beta}_t}\right)\odot \mathbf{x}\\
z_t(\mathbf{x}\mid \mathbf{z}) &= 1 \odot \mathbf{z} + 0 \odot \mathbf{x}\\
s_t(\mathbf{x}\mid \mathbf{z}) &= \left(\frac{\bm{\alpha}_t}{\bm{\beta}_t^2}\right)\odot \mathbf{z} + \left(-\frac{1}{\bm{\beta}_t^2}\right)\odot \mathbf{x}
\end{align}

These prediction targets share a common form:
\begin{equation}
f_t(\mathbf{x}\mid \mathbf{z}) = \mathbf{a}_t \odot \mathbf{z} + \mathbf{b}_t \odot \mathbf{x}
\end{equation}
where $\odot$ denotes element-wise multiplication. The corresponding marginal prediction conditioned on $\mathbf{c}$ is given by the posterior expectation:
\begin{align}
f_t(\mathbf{x},\mathbf{c}) &= \int f_t(\mathbf{x}\mid \mathbf{z})\frac{p_t(\mathbf{x}\mid \mathbf{z})p_{\text{data}}(\mathbf{z}\mid \mathbf{c})}{p_t(\mathbf{x}, \mathbf{c})}\mathrm{d}\mathbf{z} \\
&= \mathbf{a}_t \odot \int \mathbf{z}\frac{p_t(\mathbf{x}\mid \mathbf{z})p_{\text{data}}(\mathbf{z}\mid \mathbf{c})}{p_t(\mathbf{x}, \mathbf{c})}\mathrm{d}\mathbf{z} + \mathbf{b}_t \odot \mathbf{x} \\
&= \mathbf{a}_t \odot g_t(\mathbf{x},\mathbf{c}) + \mathbf{b}_t \odot \mathbf{x}
\end{align}

Thus, the core task is to compute the posterior mean of the data $\mathbf{z}$, denoted as $g_t(\mathbf{x},\mathbf{c}) \coloneq \mathbb{E}[\mathbf{z} \mid \mathbf{x}, \mathbf{c}]$.

Under our specialized triangular schedule, we partition the sequence into three regions: finalized ($0:m$), active ($m:n$), and future ($n:K$). We denote the corresponding sub-vectors as $\mathbf{v}^1, \mathbf{v}^2, \mathbf{v}^3$ for any vector $\mathbf{v}$:
\begin{align}
\mathbf{z} &= [\mathbf{z}^{0:m},\mathbf{z}^{m:n},\mathbf{z}^{n:K}] = [\mathbf{z}^\mathbf{1},\mathbf{z}^\mathbf{2},\mathbf{z}^\mathbf{3}]\\
\mathbf{x} &= [\mathbf{x}^{0:m},\mathbf{x}^{m:n},\mathbf{x}^{n:K}] = [\mathbf{x}^\mathbf{1},\mathbf{x}^\mathbf{2},\mathbf{x}^\mathbf{3}]\\
\mathbf{c} &= [\mathbf{c}^{0:m},\mathbf{c}^{m:n},\mathbf{c}^{n:K}] = [\mathbf{c}^\mathbf{1},\mathbf{c}^\mathbf{2},\mathbf{c}^\mathbf{3}]
\end{align}

The conditional distribution $p_t(\mathbf{x}\mid \mathbf{z})$ factorizes as:
\begin{align}
p_t(\mathbf{x}\mid \mathbf{z}) &= p_t(\mathbf{x}^\mathbf{1} \mid \mathbf{z}^\mathbf{1})p_t(\mathbf{x}^\mathbf{2} \mid \mathbf{z}^\mathbf{2})p_t(\mathbf{x}^\mathbf{3} \mid \mathbf{z}^\mathbf{3}) \\
&= \delta(\mathbf{x}^\mathbf{1} - \mathbf{z}^\mathbf{1})p_t(\mathbf{x}^\mathbf{2} \mid \mathbf{z}^\mathbf{2})p_{\text{noise}}(\mathbf{x}^\mathbf{3})
\end{align}

The data prior $p_{\text{data}}(\mathbf{z}\mid \mathbf{c})$ factorizes according to the causal dependency (Assumption \ref{assump:causal}):
\begin{align}
p_{\text{data}}(\mathbf{z}\mid \mathbf{c}) &= p_{\text{data}}(\mathbf{z}^\mathbf{1} \mid \mathbf{c}^\mathbf{1})
p_{\text{data}}(\mathbf{z}^\mathbf{2} \mid \mathbf{z}^\mathbf{1},\mathbf{c}^\mathbf{1},\mathbf{c}^\mathbf{2}) \\
&\quad \times p_{\text{data}}(\mathbf{z}^\mathbf{3} \mid \mathbf{z}^\mathbf{1},\mathbf{z}^\mathbf{2},\mathbf{c}^\mathbf{1},\mathbf{c}^\mathbf{2}, \mathbf{c}^\mathbf{3})
\end{align}

We now compute the posterior mean $g_t(\mathbf{x}, \mathbf{c})$ for each region.

For Region 1 ($0:m$), since $p_t(\mathbf{x}^\mathbf{1} \mid \mathbf{z}^\mathbf{1})$ is a Dirac delta, we have:
\begin{equation}
g_t^\mathbf{1}(\mathbf{x},\mathbf{c}) = \mathbf{x}^\mathbf{1}.
\end{equation}

For Region 2 ($m:n$), the posterior depends on the history $\mathbf{z}^1 = \mathbf{x}^1$:
\begin{align}
g_t^\mathbf{2}(\mathbf{x},\mathbf{c})
&=\int \mathbf{z}^\mathbf{2}
\frac{p_t(\mathbf{x}^\mathbf{2}\mid \mathbf{z}^\mathbf{2})
p_{\text{data}}(\mathbf{z}^\mathbf{2}\mid \mathbf{z}^\mathbf{1}=\mathbf{x}^\mathbf{1}, \mathbf{c}^{\mathbf{1,2}})}
{p_t(\mathbf{x}^\mathbf{2} \mid \mathbf{z}^\mathbf{1} = \mathbf{x}^\mathbf{1}, \mathbf{c}^{\mathbf{1,2}})}
\mathrm{d}\mathbf{z}^\mathbf{2}
\end{align}
This term depends only on $\mathbf{x}^{1,2}$ and $\mathbf{c}^{1,2}$.

For Region 3 ($n:K$), the posterior mean involves an expectation over $\mathbf{z}^3$. Note that $p_t(\mathbf{x}^3 \mid \mathbf{z}^3) = p_{\text{noise}}(\mathbf{x}^3)$ is independent of $\mathbf{z}^3$. Thus:
\begin{align}
g_t^\mathbf{3}&(\mathbf{x},\mathbf{c})
= \int \mathbf{z}^\mathbf{3} \frac{p_t(\mathbf{x}\mid \mathbf{z})p_{\text{data}}(\mathbf{z}\mid \mathbf{c})}{p_t(\mathbf{x}, \mathbf{c})} \mathrm{d}\mathbf{z} \\
&= \int \left[ \int \mathbf{z}^\mathbf{3} p_{\text{data}}(\mathbf{z}^\mathbf{3} \mid \mathbf{z}^\mathbf{1}=\mathbf{x}^\mathbf{1}, \mathbf{z}^\mathbf{2}, \mathbf{c}) \mathrm{d}\mathbf{z}^\mathbf{3} \right] \\
&\quad \times \frac{p_t(\mathbf{x}^\mathbf{2}\mid \mathbf{z}^\mathbf{2}) p_{\text{data}}(\mathbf{z}^\mathbf{2}\mid \mathbf{z}^\mathbf{1} = \mathbf{x}^\mathbf{1}, \mathbf{c}^\mathbf{1,2})}{p_t(\mathbf{x}^\mathbf{2} \mid \mathbf{x}^\mathbf{1}, \mathbf{c}^\mathbf{1,2})} \mathrm{d}\mathbf{z}^\mathbf{2} \\
&= \int \mathbb{E}[\mathbf{z}^\mathbf{3}\mid \mathbf{z}^\mathbf{1}=\mathbf{x}^\mathbf{1}, \mathbf{z}^\mathbf{2}, \mathbf{c}] \\
&\quad \times \frac{p_t(\mathbf{x}^\mathbf{2}\mid \mathbf{z}^\mathbf{2}) p_{\text{data}}(\mathbf{z}^\mathbf{2}\mid \mathbf{z}^\mathbf{1} = \mathbf{x}^\mathbf{1}, \mathbf{c}^\mathbf{1,2})}{p_t(\mathbf{x}^\mathbf{2} \mid \mathbf{x}^\mathbf{1}, \mathbf{c}^\mathbf{1,2})} \mathrm{d}\mathbf{z}^\mathbf{2}
\end{align}
However, in this region, $\alpha_t^k=0$ and $\beta_t^k=1$, implying $\dot{\bm{\alpha}}_t = \mathbf{0}$ and $\dot{\bm{\beta}}_t = \mathbf{0}$. Consequently, the velocity coefficients $\mathbf{a}_t$ and $\mathbf{b}_t$ are zero, so the value of $g_t^3$ does not affect the velocity field.

Finally, we substitute these results into the velocity equation.
In Regions 1 and 3, the time derivatives vanish, yielding zero velocity.
In Region 2, the velocity is determined by the local posterior mean. Thus:

\begin{align}
u_t&(\mathbf{X}_t, \mathbf{c}^{0:K})\\
&= \begin{bmatrix}
\mathbf{0}^\mathbf{1} \\
\left(\dot{\bm{\alpha}}_t - \frac{\dot{\bm{\beta}}_t}{\bm{\beta}_t}\odot \bm{\alpha}_t\right) \odot g_t^\mathbf{2}(\mathbf{X}_t^\mathbf{1,2},\mathbf{c}^\mathbf{1,2}) + \left(\frac{\dot{\bm{\beta}}_t}{\bm{\beta}_t}\right) \odot \mathbf{X}_t \\
\mathbf{0}^\mathbf{3}
\end{bmatrix} \\
&= \begin{bmatrix}
\mathbf{0}^{0:m(t)} \\
u_t^{m(t):n(t)}(\mathbf{X}_t^{0:n(t)}, \mathbf{c}^{0:n(t)}) \\
\mathbf{0}^{n(t):K}
\end{bmatrix}
\end{align}

Regardless of the prediction parameterization, since the update $\mathrm{d}\mathbf{X}_t$ always reverts to the velocity $u_t$ (which is zero in regions 1 and 3), we are only concerned with the active window. Thus, it suffices to train the prediction target within the active window.

\end{proof}

\begin{table*}[thb]
\centering
\scalebox{0.77}{
\begin{tabular}{cccccccccccc}
& layers & hidden & ffn & heads & window & $v$/$\epsilon$/$x_{0}$ & steps & FID$\downarrow$ & R@3$\uparrow$ & MM-Dist$\downarrow$ & Diversity$\rightarrow$\\
\midrule
real motion & &  &  &  &  &  & - & 0.002 & 0.797 & 2.974 & 9.503 \\
ours  & 8 & 1024 & 2048 & 8 & 5 & $v$ & 10 & \et{\textbf{0.057}}{.002} & \et{0.810}{.003} & \et{2.887}{.007} & \et{9.579}{.062} \\
\midrule
& 2 & 1024 & 2048 & 8 & 5 & $v$ & 10 & 0.087 & 0.787 & 3.006 & 9.419 \\
& 4 & 1024 & 2048 & 8 & 5 & $v$ & 10 & 0.080 & 0.810 & 2.925 & 9.536 \\
& 12 & 1024 & 2048 & 8 & 5 & $v$ & 10 & 0.062 & 0.798 & 2.942 & \textbf{9.524} \\
& 8 & 128 & 512 & 8 & 5 & $v$ & 10 & 0.121 & 0.786 & 3.048 & 9.551 \\
& 8 & 256 & 1024 & 8 & 5 & $v$ & 10 & 0.077 & 0.800 & 2.954 & 9.531 \\
& 8 & 512 & 2048 & 8 & 5 & $v$ & 10 & 0.088 & 0.811 & 2.949 & 9.560 \\
& 8 & 768 & 3072 & 16 & 5 & $v$ & 10 & 0.084 & 0.812 & 2.936 & 9.584 \\
& 8 & 1024 & 4096 & 16 & 5 & $v$ & 10 & 0.073 & 0.814 & 2.909 & 9.504 \\
& 8 & 2048 & 8192 & 16 & 5 & $v$ & 10 & 0.081 & 0.792 & 3.001 & 9.561 \\
\midrule
& 8 & 1024 & 2048 & 8 & 1 & $v$ & 10 & 1.38 & 0.644 & 3.987 & 8.855 \\
& 8 & 1024 & 2048 & 8 & 20 & $v$ & 10 & 0.060 & \textbf{0.821} & \textbf{2.842} & 9.683 \\
\midrule
& 8 & 1024 & 2048 & 8 & 5 & $\epsilon$ & 10 & 0.124 & 0.807 & 2.855 & 9.549 \\
& 8 & 1024 & 2048 & 8 & 5 & $x_{0}$ & 10 & 0.060 & 0.805 & 2.901 & 9.525 \\
\midrule
& 8 & 1024 & 2048 & 8 & 5 & $v$ & 20 & 0.061 & 0.807 & 2.903 & 9.624 \\
& 8 & 1024 & 2048 & 8 & 5 & $v$ & 100 & 0.060 & 0.812 & 2.895 & 9.653 \\
\end{tabular}
}
\caption{Ablation study of the model architecture on HumanML3D test set.} 
\label{tab:parameter}
\end{table*}

\begin{remark}[Stochastic Extension]
If we introduce a diffusion term $u^{fp}_t = u_t + \frac{\bm{\sigma}_t^2}{2} \odot s_t$, the drift $u^{fp}_t$ is no longer guaranteed to be zero in the finalized and future regions. However, the score function $s_t$ in these regions depends only on $\mathbf{X}_t$:
\begin{itemize}
\item In the future region ($k \ge n(t)$), we have $\alpha_t^k=0, \beta_t^k=1$. The posterior mean vanishes, so substituting into the score definition yields $s_t(\mathbf{x}) = -\mathbf{x}$.
\item In the finalized region ($k < m(t)$), we substitute the posterior mean $g_t(\mathbf{x}) = \mathbf{x}$ derived in the proof. Keeping $\alpha_t$ and $\beta_t$ (treating $\beta_t$ as a small non-zero value to avoid division by zero), the marginal score becomes
\begin{align}
s_t(\mathbf{x}) &= \frac{\bm{\alpha}_t \odot \mathbf{x} - \mathbf{x}}{\bm{\beta}_t^2} \\
&= -\frac{(1-\bm{\alpha}_t)\odot \mathbf{x}}{\bm{\beta}_t^2} = -\frac{\mathbf{x}}{\bm{\beta}_t}
\end{align}
\end{itemize}
Since we generally do not wish to add diffusion to parts that are already denoised or remain pure noise, and to avoid numerical instability where $\beta_t \approx 0$, we set $\bm{\sigma}_t$ to be a diagonal matrix that is non-zero only in the active window. This preserves the streaming locality. Within the active window, $s_t$ can be derived from $u_t$.
\end{remark} 

\begin{remark}[Interpretation of Causal Dependency]
This assumption (Assumption~\ref{assump:causal}) does not imply that the model cannot plan or execute complex behaviors. The available control signal $\mathbf{c}^{0:l}$ can itself contain complex, long-term instructions (\textit{e.g.}, "first walk, then run"). The condition merely states that the motion generated up to frame $l$ does not depend on \textit{future, unseen} instructions that arrive later in the stream. Thus, this assumption holds in most practical streaming scenarios. We explicitly use this assumption to factorize $p_{\text{data}}(\mathbf{z}\mid\mathbf{c})$ in the proof of Theorem~\ref{thm:streaming_locality_appendix}.
\end{remark} 

\section{Baseline Implementation Details}
\label{sec:sup3}
To ensure a fair comparison, we modified the PRIMAL baseline as follows: (1) replaced the cosine schedule-based diffusion training with standard flow matching; (2) changed the transformer backbone from \texttt{nn.TransformerEncoderBlock} to the DiT block used in Wan; (3) removed the FK and velocity losses; (4) upgraded the text conditioning from discrete action tags (\textit{e.g.}, ``walk'', ``run'') to natural language descriptions using a T5 encoder; and (5) adopted the same 263D motion representation instead of the original 267D format.

\section{Hyper Parameter Search}
\label{sec:sup4}
We conduct a grid search over the main hyperparameters of our motion generation network, and report the results in Table \ref{tab:parameter}. A hidden size of 1024 yields the best performance in our setting. Increasing the window size provides slight gains, but at the cost of higher response latency. Different prediction types do not show significant performance differences under our configuration.

\begin{table*}[h]
\centering
\scalebox{0.85}{
\begin{tabular}{l ccc}
\toprule
\multirow{2}{*}{\textbf{Comparison (A vs B)}} & \textbf{Preference} & \textbf{Transition} & \textbf{Consistency} \\
 & Win Rate (A : B) & Win Rate (A : B) & Win Rate (A : B) \\
\midrule
\multicolumn{4}{l}{\textit{Ours vs. Baselines}} \\
FloodDiffusion vs PRIMAL & 63.2\% (12 : 7) & 62.5\% (10 : 6) & 55.6\% (10 : 8) \\
FloodDiffusion vs MotionStreamer & 56.3\% (9 : 7) & 54.5\% (6 : 5) & 50.0\% (6 : 6) \\
\midrule
\multicolumn{4}{l}{\textit{Ours vs. Real Motion}} \\
FloodDiffusion vs GT & 50.0\% (8 : 8) & 47.1\% (8 : 9) & 42.1\% (8 : 11) \\
\midrule
\multicolumn{4}{l}{\textit{Reference Comparisons}} \\
PRIMAL vs GT & 31.8\% (7 : 15) & 33.3\% (5 : 10) & 42.9\% (6 : 8) \\
MotionStreamer vs GT & 28.6\% (4 : 10) & 40.0\% (8 : 12) & 39.1\% (9 : 14) \\
MotionStreamer vs PRIMAL & 61.5\% (8 : 5) & 52.4\% (11 : 10) & 57.1\% (8 : 6) \\
\bottomrule
\end{tabular}
}
\caption{\textbf{Pairwise comparison results.} We report the win rate of method A over method B, along with the raw vote counts in parentheses. Our method outperforms both baselines on all metrics (or ties) and achieves a win rate close to 50\% against real motion (GT), indicating high realism.}
\label{tab:pairwise_counts}
\end{table*}

\section{Details of User Study}
\label{sec:sup5}

To evaluate the perceptual quality of the generated motions, we conducted a user study with 100 participants. We compared our \textit{FloodDiffusion} against real motion ground truth (GT) and two streaming baselines: \textit{PRIMAL} \citep{zhang2025primal} and \textit{MotionStreamer} \citep{xiao2025motionstreamer}.

\paragraph{Questionnaire Design}
We collected 100 questionnaires in total. Each questionnaire consists of three distinct questions, where each question compares a randomly sampled pair of videos generated by two different methods. The three questions correspond to the three evaluation metrics respectively:
\begin{enumerate}
    \item \textbf{Preference}: Given a pair of videos, choose the one that appears more reasonable and plausible given the text prompt.
    \item \textbf{Transition}: Given a pair of videos, choose the one that transitions more smoothly between different actions.
    \item \textbf{Consistency}: Given a pair of videos, choose the one that better maintains a consistent motion style across different actions.
\end{enumerate}
An example of the question interface is shown in Figure \ref{fig:questionnaire_example}.

\begin{figure}[h]
    \centering
    \fbox{
    \begin{minipage}{0.95\linewidth}
        \vspace{0.5em}
        \textbf{\large User Study - Question 1 (Preference)}
        \vspace{0.5em}
        
        \textbf{Prompt Sequence:} ``\textit{A person walks forward}'' $\rightarrow$ ``\textit{A person sits down}''
        
        \vspace{1em}
        \textbf{Question:} Which video appears more reasonable given the text prompt?
        \vspace{0.5em}
        
        \begin{center}
        \begin{tabular}{c@{\hspace{0.5cm}}c}
             \fbox{\parbox{2.8cm}{\centering \vspace{1.2cm} \textbf{Video A} \vspace{1.2cm}}} &  
             \fbox{\parbox{2.8cm}{\centering \vspace{1.2cm} \textbf{Video B} \vspace{1.2cm}}} \\
             \noalign{\vspace{0.5em}}
             \parbox{2.8cm}{\centering \Large$\bigcirc$\normalsize~Select A} & 
             \parbox{2.8cm}{\centering \Large$\bigcirc$\normalsize~Select B} \\
        \end{tabular}
        \end{center}
        
        \vspace{0.5em}
    \end{minipage}
    }
    \caption{An illustrative example of a single question in the user study. Each questionnaire contains three such comparisons, one for each metric (Preference, Transition, and Consistency), using different video pairs.}
    \label{fig:questionnaire_example}
\end{figure}

\paragraph{Pairwise Comparison Results}
We aggregated the votes from all valid questionnaires. The detailed head-to-head win counts for each pair of methods are reported in Table \ref{tab:pairwise_counts}. These raw counts were used to compute the Bradley--Terry scores presented in the main text.

\end{document}